\newtheorem{theorem}{Theorem}
\newtheorem{lemma}{Lemma}
\newtheorem{definition}{Definition}
\newtheorem{assumption}{Assumption}
\newcommand{\ind}{\perp\!\!\!\!\perp} 
\definecolor{DarkGreen}{RGB}{38,135,94}
\definecolor{LightGreen}{RGB}{238,247,243}
\newtcolorbox{greenbox}[1]{colback=LightGreen!35!white,
    colframe=DarkGreen!75!black,fonttitle=\bfseries,
    title={#1}}
\definecolor{DarkRed}{RGB}{184,31,0}
\definecolor{LightRed}{RGB}{218,118,91}
\newtcolorbox{redbox}[1]{colback=LightRed!15!white,
    colframe=DarkRed!75!black,fonttitle=\bfseries,
    title={#1}}
\begin{document}
\title{Meta-Learning with a Geometry-Adaptive Preconditioner}

\author{Suhyun~Kang, Duhun-Hwang, Moonjung~Eo, Taesup Kim and~Wonjong~Rhee,~\IEEEmembership{Fellow,~IEEE}
\IEEEcompsocitemizethanks{
\IEEEcompsocthanksitem This is an extended version of our previous CVPR'23 work in~\cite{kang2023meta}. We further validate the applicability and generalizability of GAP with two additional experiments -- 1. few-shot domain generalization and 2. reinforcement learning.
\IEEEcompsocthanksitem S.~Kang, D.~Hwang, and, W.~Rhee are with Deep Representation Learning Research Group, Department of Intelligence and Information, Seoul National University, Seoul, South Korea.\protect \\
E-mail: \{su\_hyun,~yelobean,~wrhee\}~@snu.ac.kr
\IEEEcompsocthanksitem M.~Eo is with Data Intelligence Lab, LG AI Research, Seoul, South Korea.\protect \\
E-mail: moonj@lgresearch.ai
\IEEEcompsocthanksitem T.~Kim is with Learning and Adaptation Algorithm Lab, Graduate School of Data Science, Seoul National University, Seoul, South Korea.\protect \\
Email: taesup.kim@snu.ac.kr
\IEEEcompsocthanksitem W.~Rhee is also with IPAI (Interdisciplinary Program in Artificial Intelligence) and AI Institute at Seoul National University.}
}

%
%

%

\IEEEtitleabstractindextext{%
\begin{abstract}
Model-agnostic meta-learning (MAML) is one of the most successful meta-learning algorithms. It has a bi-level optimization structure where the outer-loop process learns a shared initialization and the inner-loop process optimizes task-specific weights. Although MAML relies on the standard gradient descent in the inner-loop, recent studies have shown that controlling the inner-loop’s gradient descent with a meta-learned preconditioner can be beneficial. Existing preconditioners, however, cannot simultaneously adapt in a task-specific and path-dependent way. Additionally, they do not satisfy the Riemannian metric condition, which can enable the steepest descent learning with preconditioned gradient. In this study, we propose Geometry-Adaptive Preconditioned gradient descent (GAP) that can overcome the limitations in MAML; GAP can efficiently meta-learn a preconditioner that is dependent on task-specific parameters, and its preconditioner can be shown to be a Riemannian metric. Thanks to the two properties, the geometry-adaptive preconditioner is effective for improving the inner-loop optimization. 
Experiment results show that GAP outperforms the state-of-the-art MAML family and preconditioned gradient descent-MAML (PGD-MAML) family in a variety of few-shot learning tasks: few-shot regression, few-shot classification, cross-domain few-shot classification, few-shot domain generalization, and reinforcement learning.    
\end{abstract}

\begin{IEEEkeywords}
Meta-learning, Few-shot learning, MAML, Preconditioned gradient descent, Riemannian manifold, Few-shot domain generalization, Reinforcement learning
\end{IEEEkeywords}}

\maketitle

\IEEEdisplaynontitleabstractindextext

%
\IEEEpeerreviewmaketitle

%
\IEEEpeerreviewmaketitle

\section{Introduction}
\label{sec:intro}
\IEEEPARstart{M}{eta-learning}, or \textit{learning to learn}, enables algorithms to quickly learn new concepts with only a small number of samples by extracting prior-knowledge known as meta-knowledge from a variety of tasks and by improving the generalization capability over the new tasks.

Among the meta-learning algorithms, the category of optimization-based meta-learning~\cite{finn2017model, finn2018probabilistic, raghu2019rapid, baik2021meta, ding2022gradient} has been gaining popularity due to its flexible applicability over diverse fields including robotics~\cite{song2020rapidly, wen2021multi}, medical image analysis~\cite{maicas2018training, singh2021metamed}, language modeling~\cite{mi2019meta, liu2020does}, and object detection~\cite{wu2020meta, perez2020incremental}.
In particular, Model-Agnostic Meta-Learning (MAML)~\cite{finn2017model} is one of the most prevalent gradient-based meta-learning algorithms.

\begin{figure*}[!t]
\centering
\subfloat[MAML]{\includegraphics[width=0.17\textwidth]{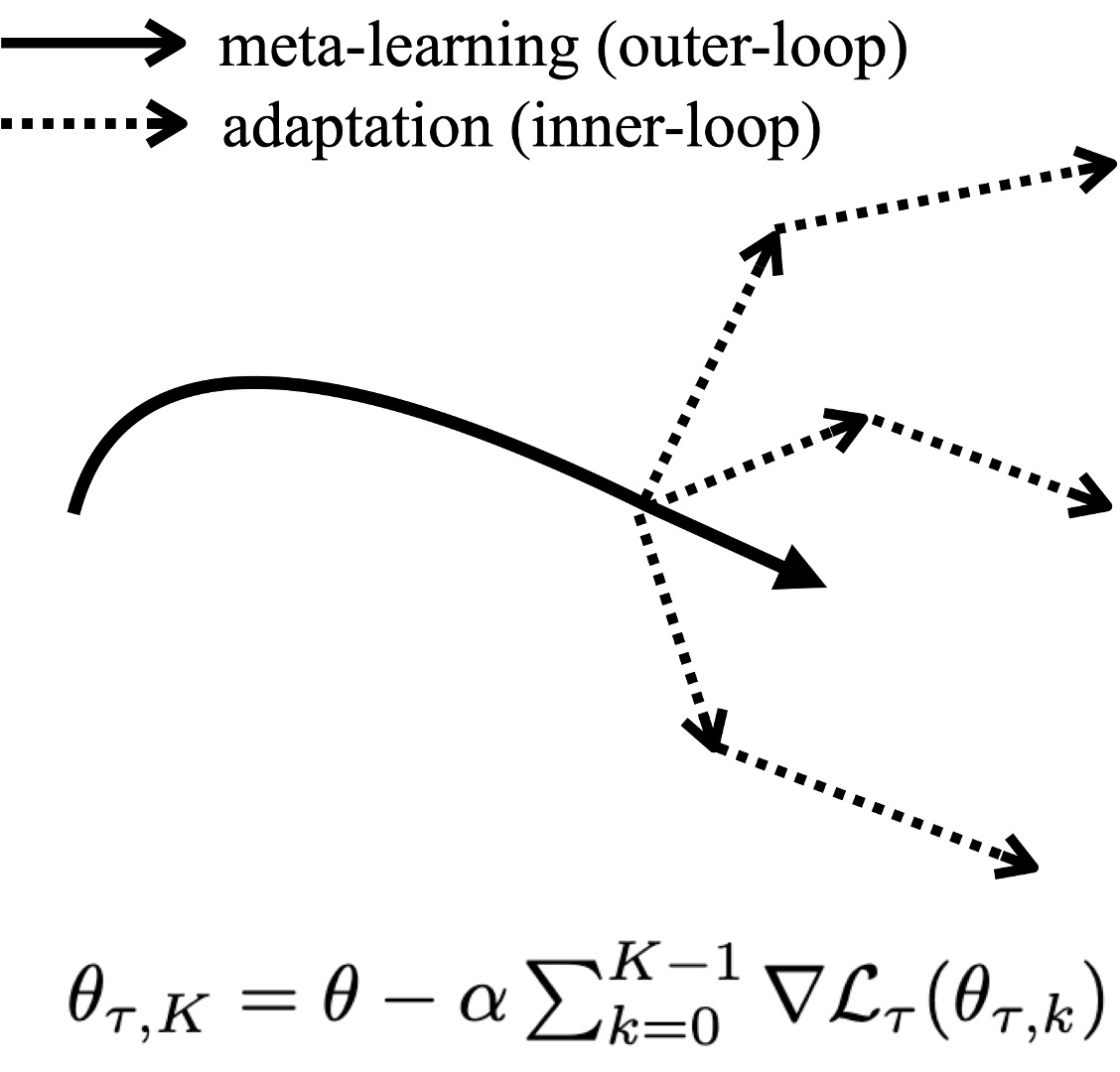}}
\subfloat[Non-adaptive $\mathbf{P}(\phi)$]{\includegraphics[width=0.20\textwidth]{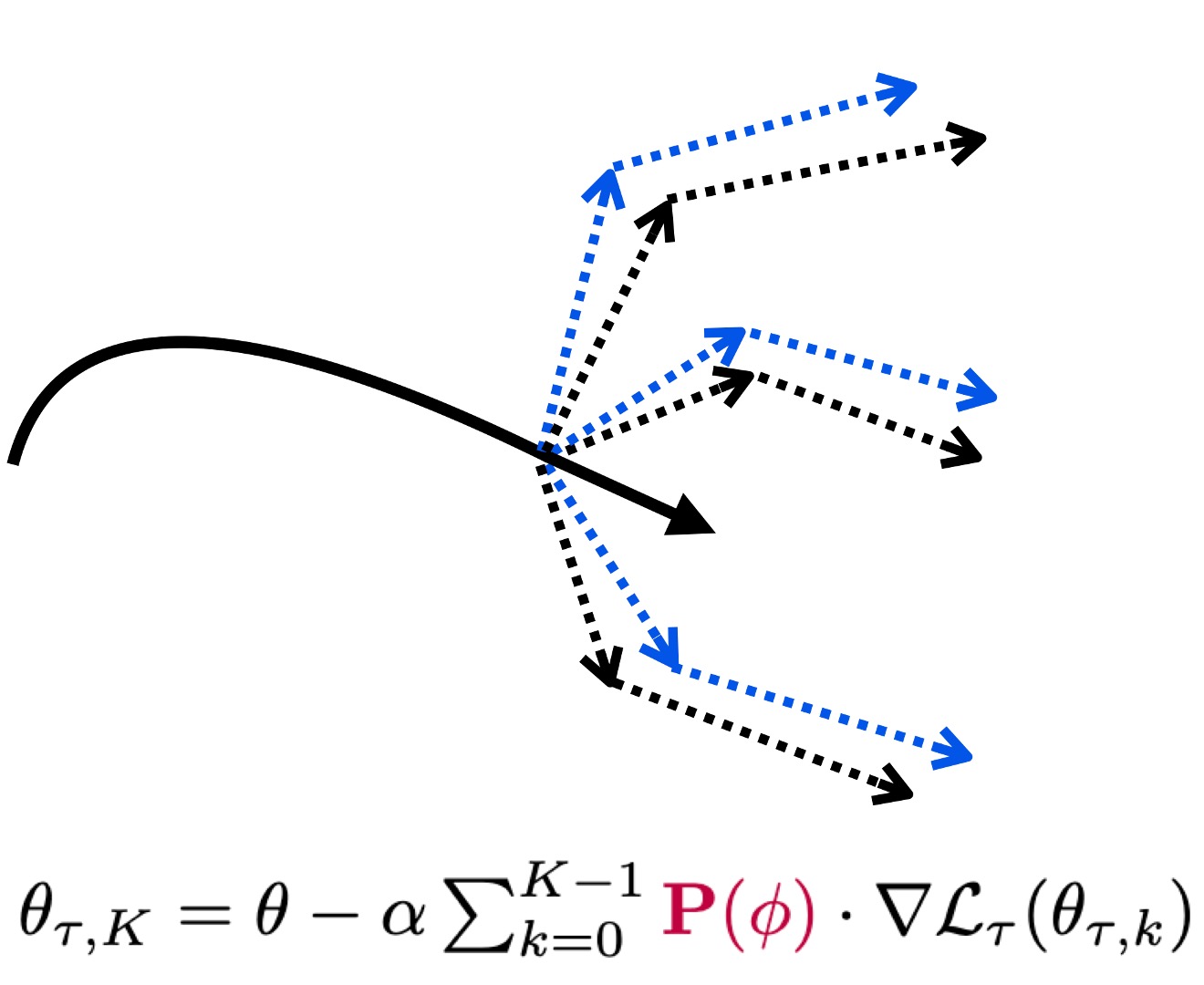}}
\subfloat[Adaptive $\mathbf{P}(k; \phi)$]{\includegraphics[width=0.20\textwidth]{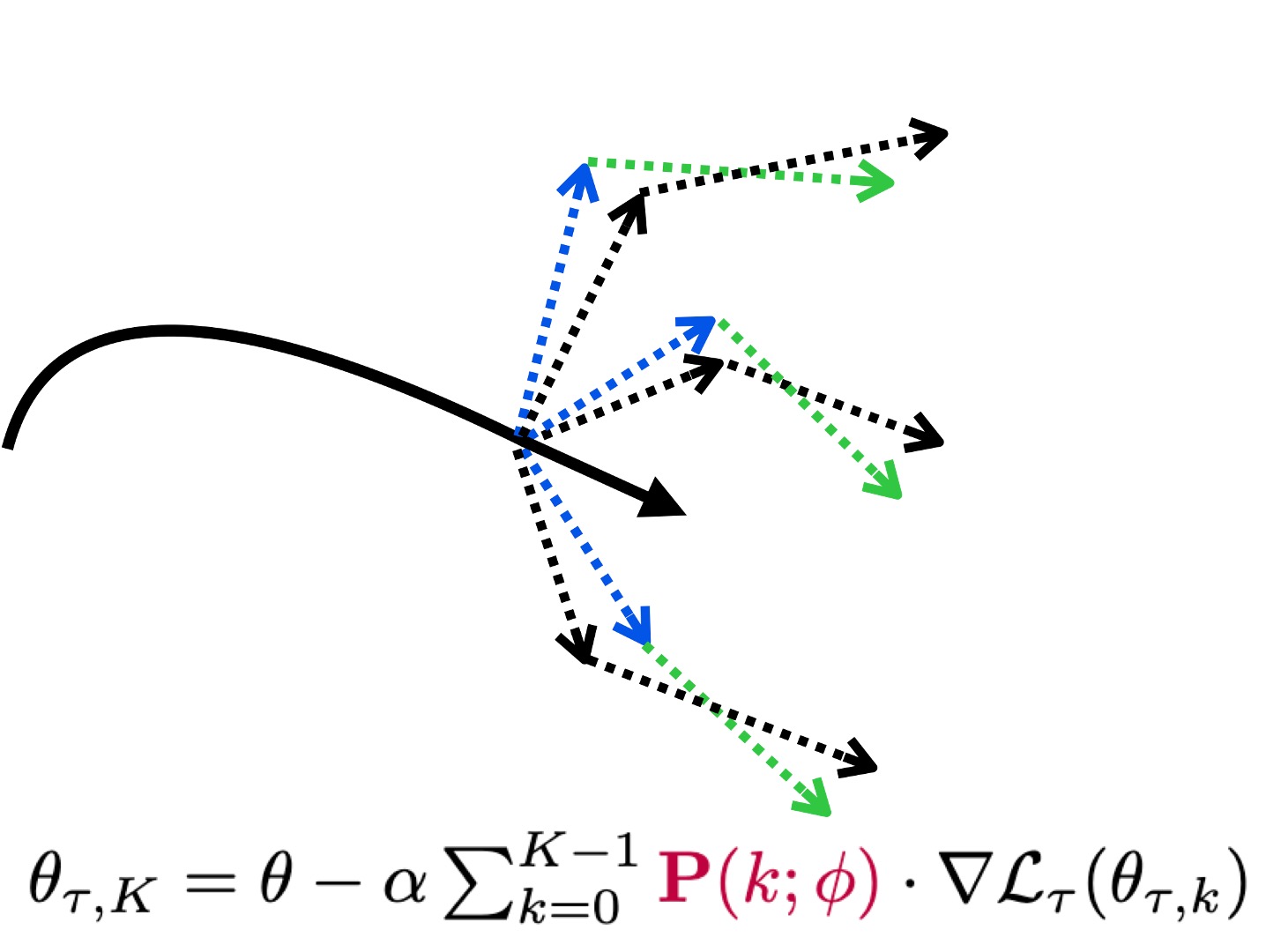}}
\subfloat[Adaptive $\mathbf{P}(D_{\tau}^{tr}; \phi)$]{\includegraphics[width=0.20\textwidth]{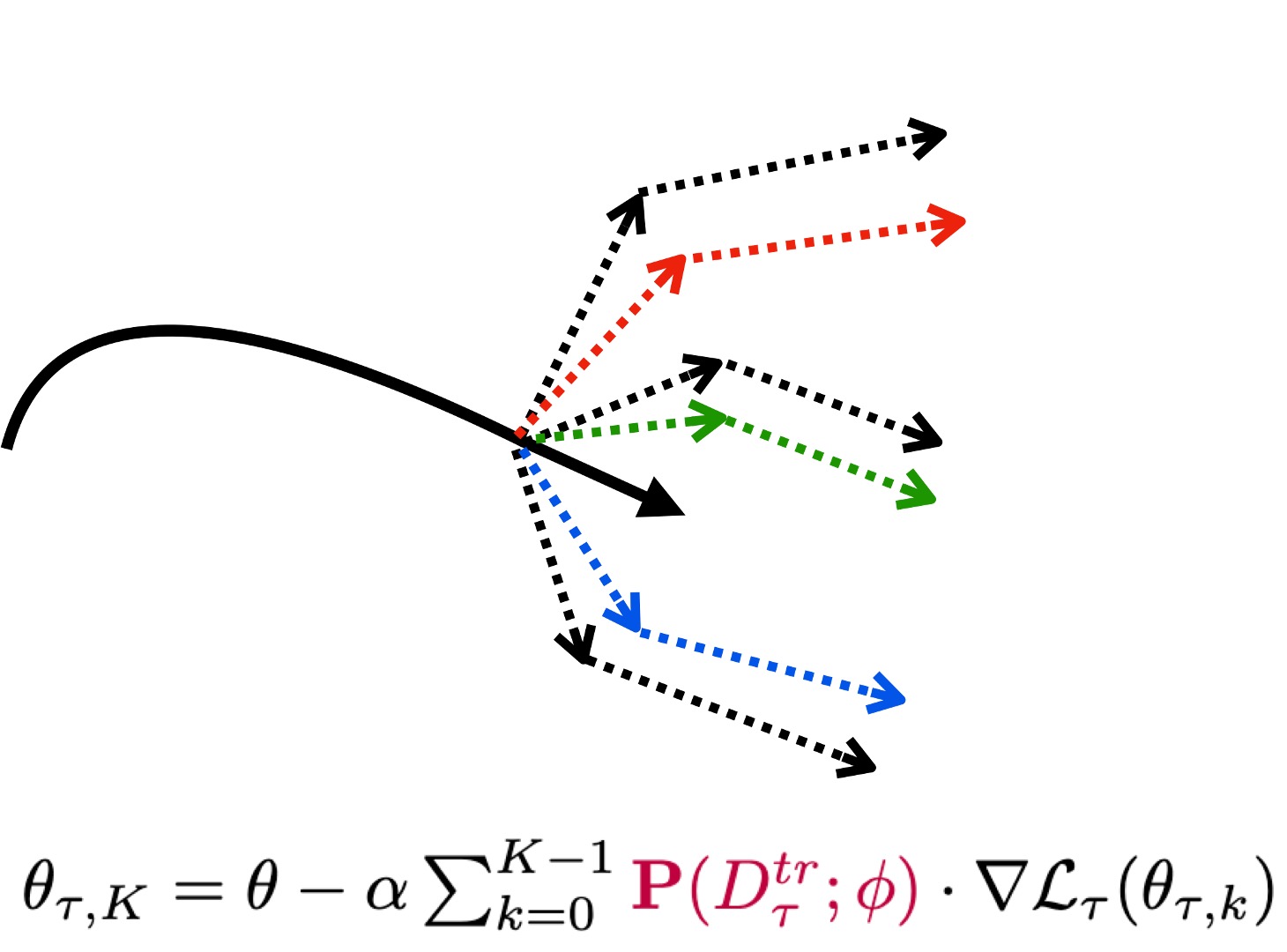}}
\subfloat[Adaptive $\mathbf{P}(\theta_{\tau, k}; \phi)$]{\includegraphics[width=0.20\textwidth]{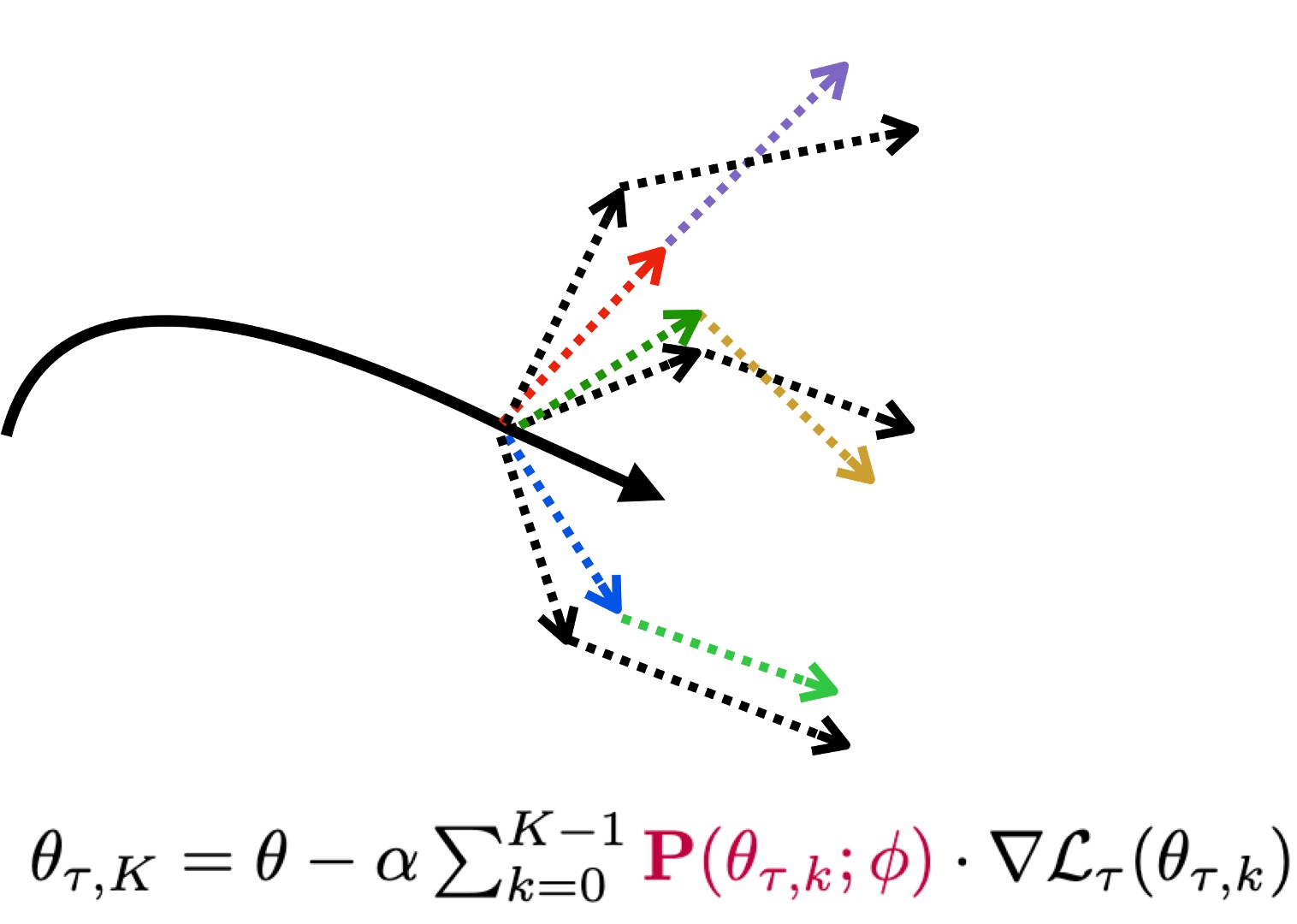}}
\caption{
Diagram of MAML and PGD-MAML family. For the inner-loop adaptation in each diagram, the dotted lines of the same color indicate that they use a common preconditioning matrix (preconditioner). (a) MAML adaptation: no preconditioner is used (i.e., $\mathbf{P}=\mathbf{I}$). (b) $\mathbf{P}(\phi)$: a constant preconditioner is used in the inner-loop where the preconditioner's meta-parameter $\phi$ is meta-learned. (c) $\mathbf{P}(k;\phi)$: a constant preconditioner is used for each inner-step $k$. Preconditioner for each step is meta-learned, but $\mathbf{P}(k,\phi)$ is not task-specific. (d) $\mathbf{P}(D_{\tau}^{\text{tr}};\phi)$: a constant preconditioner is used for each task. Preconditioner for each task is meta-learned, but $\mathbf{P}(D_{\tau}^{\text{tr}};\phi)$ is not dependent on $k$. (e) GAP adapts  $\mathbf{P}(\theta_{\tau, k}; \phi)$: a fully adaptive preconditioner is used where it is \textit{task-specific} and \textit{path-dependent}. Instead of saying `dependent on $k$', we specifically say it is \textit{path-dependent} because the exact dependency is on the task-specific parameter set $\theta_{\tau, k}$ that is considerably more informative than $k$.}
\label{fig:dependence_concept}
\end{figure*}

\begin{figure}[!t]
\centering
\captionsetup[subfloat]{justification=centering, labelfont=scriptsize, textfont=scriptsize}
\subfloat[Gradient Descent (GD)]{\includegraphics[width=0.25\textwidth]{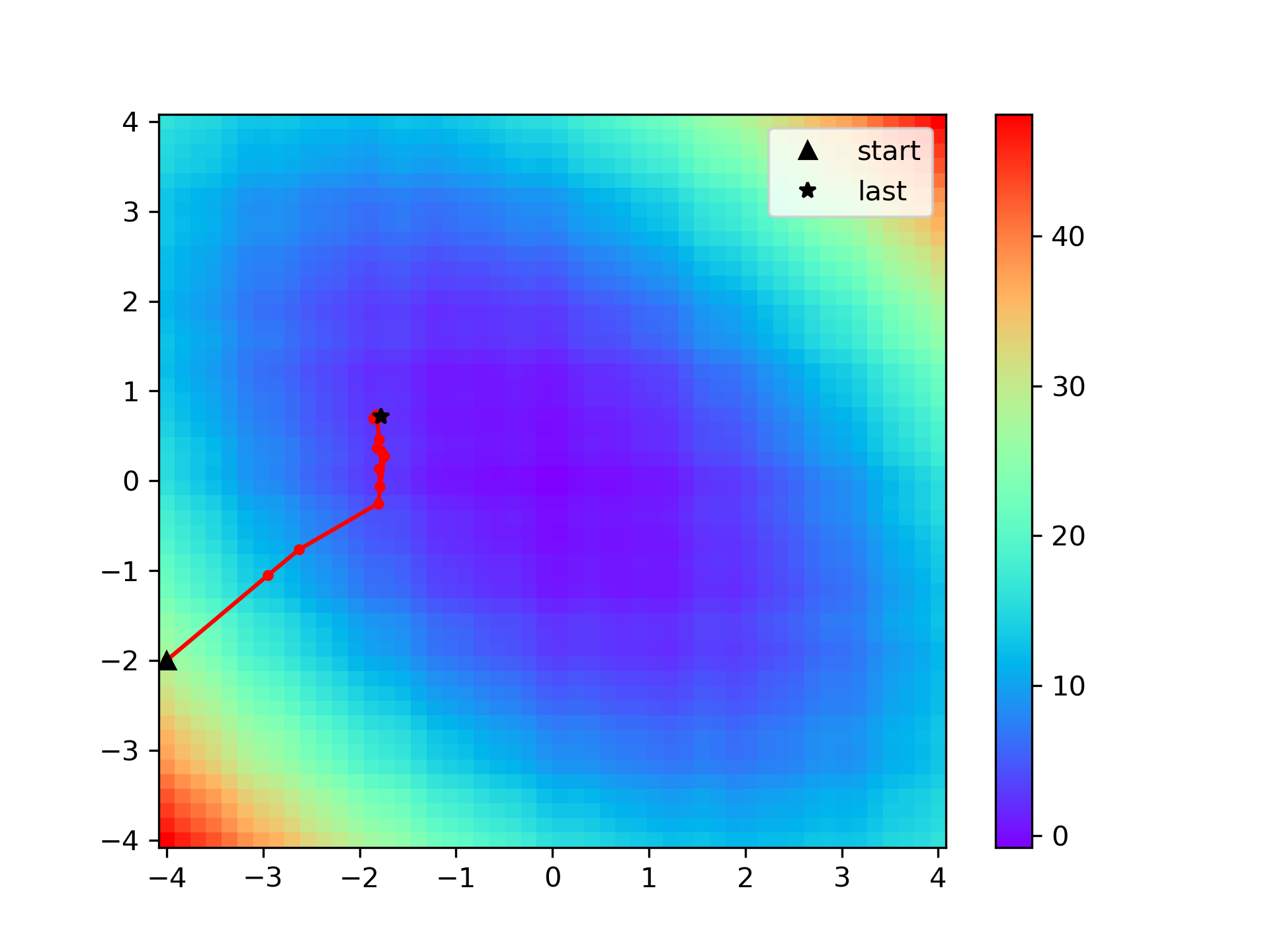}}%
\subfloat[Preconditioned GD\\ (Non-Riemannian metric)]{\includegraphics[width=0.25\textwidth]{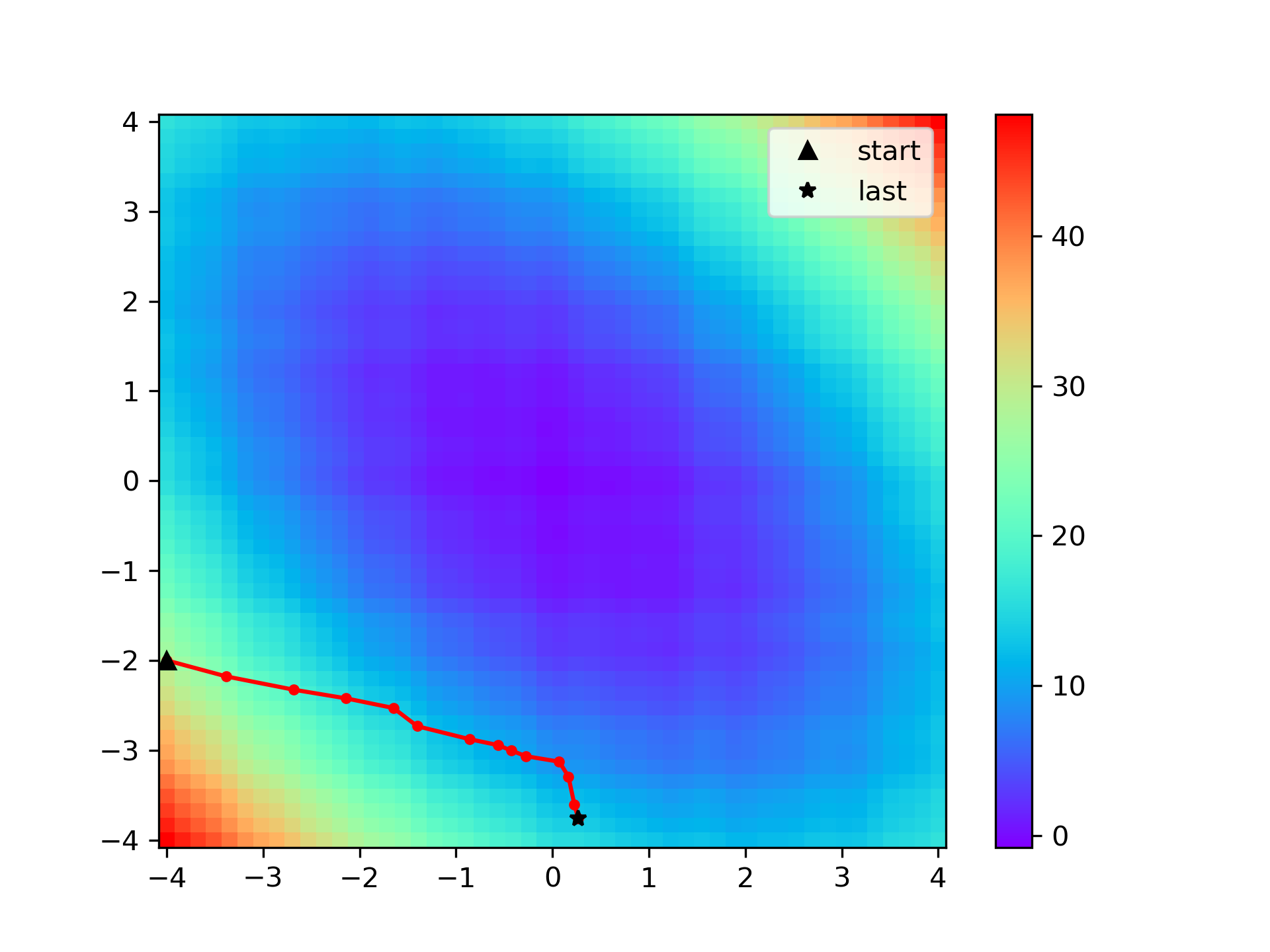}}\\
\subfloat[Preconditioned GD\\ (Riemannian metric)]{\includegraphics[width=0.25\textwidth]{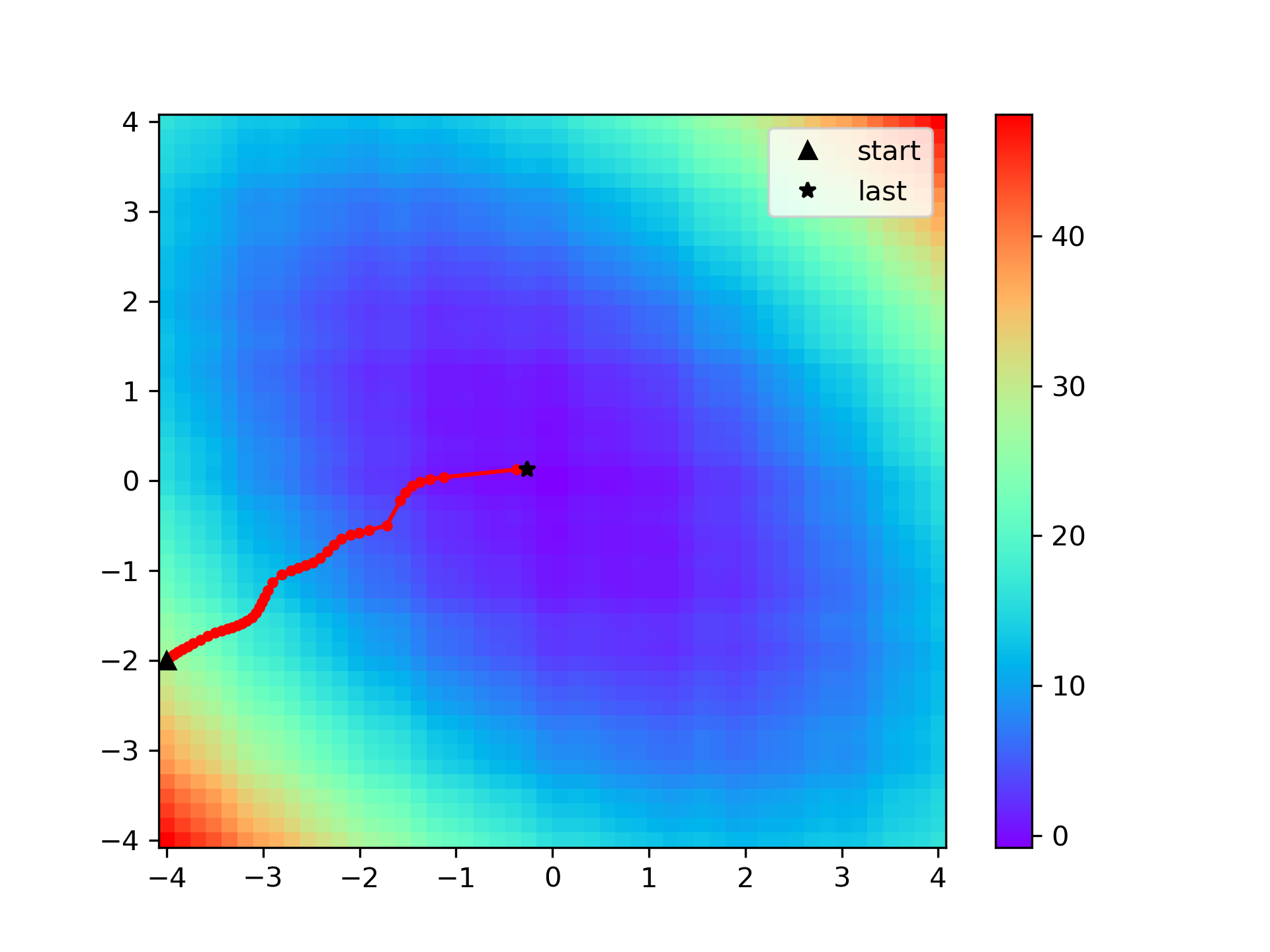}}%
\caption{%
A toy example for illustrating the effect of Riemannian metric. When the curvature of the parameter space is poorly conditioned, (a) gradient descent can suffer from the difficulty of finding the solution, (b) a preconditioned gradient descent with a preconditioner that does not satisfy the Riemannian metric condition can suffer further, and (c) a preconditioned gradient descent with a preconditioner that is a Riemannian metric can perform better.
}
\label{fig:riemannian_concept}
\end{figure}

Many recent studies have improved MAML by adopting a 
Preconditioned Gradient Descent~(PGD) for inner-loop optimization~\cite{li2017meta,lee2018gradient, park2019meta, simon2020modulating, rajasegaran2020meta, zhao2020meta, von2021learning}.
In this paper, we collectively address PGD-based MAML algorithms as the PGD-MAML family.
A PGD is different from the ordinary gradient descent because it performs a preconditioning on the gradient using a preconditioning matrix $\mathbf{P}$, also called a \textit{preconditioner}. 
A PGD-MAML algorithm meta-learns not only the initialization parameter $\theta_0$ of the network but also the meta-parameter $\phi$ of the preconditioner $\mathbf{P}$. 

For the inner-loop optimization, $\mathbf{P}$ was kept static in most of the previous works~(Fig.~\ref{fig:dependence_concept}(b))~\cite{li2017meta,lee2018gradient, park2019meta, zhao2020meta, von2021learning}. 
Some of the previous works considered adapting the preconditioner $\mathbf{P}$ with the inner-step $k$~(Fig.~\ref{fig:dependence_concept}(c))~\cite{rajasegaran2020meta} and some others with the individual task~(Fig.~\ref{fig:dependence_concept}(d))~\cite{simon2020modulating}. 
They achieved performance improvement by considering individual tasks and inner-step, respectively, and showed that both factors were valuable.
However, both factors have not been considered simultaneously in the existing studies.

When a parameter space has a certain underlying structure, there exists a Riemannian metric corresponding the parameter space~\cite{amari1996neural, amari1998natural}.
If the preconditioning matrix is the Riemannian metric, the preconditioned gradient is known to become the steepest descent on the parameter space~\cite{amari1967theory, amari1996neural, amari1998natural, amari1998natural2,kakade2001natural}.
An illustration of a toy example is shown in Fig.~\ref{fig:riemannian_concept}. 
The optimization path of an ordinary gradient descent is shown in Fig.~\ref{fig:riemannian_concept}(a).
Compared to the ordinary gradient descent, a preconditioned gradient descent with a preconditioner that does not satisfy the Riemannian metric condition can actually harm the optimization. For the example in Fig.~\ref{fig:riemannian_concept}(b), the preconditioner affects the optimization into an undesirable direction and negatively affects the gradient descent.
On the contrary, if the preconditioner is the Riemannian metric corresponding the parameter space, the preconditioned gradient descent can become the steepest descent and can exhibit a better optimization behavior as shown in Fig.~\ref{fig:riemannian_concept}(c).
While the Riemannian metric condition~(i.e, positive definiteness) is a necessary condition for steepest descent learning,
the existing studies on PGD-MAML family did not consider constraining preconditioners to satisfy the condition for Riemannian metric.

In this study, we propose a new PGD method named \textbf{G}eometry \textbf{A}aptive \textbf{P}reconditioned gradient descent (GAP). 
Specifically, GAP satisfies two desirable properties which have not been considered before.
First, GAP's preconditioner $\mathbf{P}_{\text{GAP}}$ is a fully adaptive preconditioner that can adapt to the individual task~(\textit{task-specific}) and to the optimization-path~(\textit{path-dependent}).
The full adaptation is made possible by having the preconditioner depend on the task-specific parameter $\theta_{\tau,k}$ (Fig.~\ref{fig:dependence_concept}(e)). 
Second, we prove that $\mathbf{P}_{\text{GAP}}$ is a Riemannian metric. 
To this end, we force the meta-parameters of $\mathbf{P}_{\text{GAP}}$ to be positive definite. 
Thus, GAP guarantees the steepest descent learning on the parameter space corresponding to $\mathbf{P}_{\text{GAP}}$.
Owing to the two properties, GAP enables a geometry-adaptive learning in inner-loop optimization.

For the implementation of GAP, we utilize the Singular Value Decomposition~(SVD) operation
to come up with our preconditioner satisfying the desired properties. 
For the recently proposed large-scale architectures, computational overhead can be an important design factor and we provide a low-computational approximation, \textit{Approximate GAP}, that can be proven to asymptotically approximate the operation of GAP.

To demonstrate the effectiveness of GAP, we empirically evaluate our algorithm on popular few-shot learning tasks; few-shot regression, few-shot classification, and few-shot cross-domain classification. 
The results show that GAP outperforms the state-of-the-art MAML family and PGD-MAML family.

\section{Backgrounds}
\label{sec:backgrouds}
\subsection{Model-Agnostic Meta-Learning (MAML)}
\label{sec:2.1}
The goal of MAML~\cite{finn2017model} is to find the best initialization that the model can quickly adapt from, such that the model can perform well for a new task. MAML consists of two levels of main optimization processes: inner-loop and outer-loop optimizations. Consider the model $f_{\theta}(\cdot)$ with parameter $\theta$. For a task $\tau=\{D^{\text{tr}}_{\tau}, D^{\text{val}}_{\tau}\}$ sampled from the task distribution $p(\mathcal{T})$, the inner-loop optimization is defined as:
\begin{equation}
    \begin{split}
        \theta_{\tau, K} &=\theta_{\tau,0}-\alpha\sum_{k=0}^{K-1}\nabla_{\theta_{\tau,k}}\mathcal{L}^{\text{in}}_{\tau}(\theta_{\tau,k};D^{\text{tr}}_{\tau})\;\;s.t\;\;\theta_{\tau,0}=\theta,
    \end{split}
\end{equation}
where $\theta_{\tau, k}$ is task-specific parameters for task $\tau$, and $\alpha$ is the learning rate for inner-loop optimization, $\mathcal{L}_{\tau}^{\text{in}}$ is the inner-loop's loss function, and $K$ is the number of gradient descent steps.
With $D^{\text{val}}_{\tau_i}$ in each task, we can define outer-loop optimization as
\begin{equation}
    \theta\leftarrow\theta-\beta\nabla_{\theta}{\mathbb{E}}_{\tau}\left[\mathcal{L}_{\tau}^{\text{out}}(\theta_{\tau,K};D^{\text{val}}_{\tau})\right],
\end{equation}
where $\beta$ is the learning rate for outer-loop optimization, and $\mathcal{L}_{\tau}^{\text{out}}$ is the outer-loop's loss function.

\subsection{Preconditioned Gradient Descent (PGD)}
\label{sec:2.3}
PGD is a method that minimizes the empirical risk through a gradient update with a preconditioner that rescales the geometry of the parameter space. 
Given model parameters $\theta$ and task $\tau=\{D^{\text{tr}}_{\tau},D^{\text{val}}_{\tau}\}$, we can define the preconditioned gradient update with a preconditioner $\mathbf{P}$ as follows:
\begin{equation} \small
    \label{eqn:pgd}
    \begin{split}
        \theta_{\tau, k+1} &=\theta_{\tau,k}-\alpha\mathbf{P}\nabla_{\theta_{\tau,k}}\mathcal{L}_{\tau}(\theta_{\tau,k};D_{\tau}^{\text{tr}})\\
        & \;\;k=0,1,\cdots \text{ and } \theta_{\tau,0}=\theta
    \end{split},
\end{equation}
where $\mathcal{L}_{\tau}(\theta_{\tau,k};D_{\tau}^{\text{tr}})$ is the empirical loss for task $\tau$ and parameters $\theta_{\tau,k}$. 
Setting $\mathbf{P}=\mathbf{I}$ recovers Eq.~(\ref{eqn:pgd}) to the basic gradient descent~(GD). 
Choice of $\mathbf{P}$ for exploiting the second-order information includes the inverse Fisher information matrix $\mathbf{F}^{-1}$ which leads to the natural gradient descent (NGD)~\cite{amari1998natural}; 
the inverse Hessian $\mathbf{H}^{-1}$ which corresponds to the Newton's method~\cite{lecun2012efficient}; 
and the diagonal matrix estimation with the past gradients which results in the adaptive gradient methods~\cite{duchi2011adaptive, kingma2014adam}. 
They often reduce the effect of pathological curvature and speed up the optimization~\cite{amari2020does}. 

\subsection{Unfolding: reshaping a tensor into a matrix}
\label{sec:2.2}
In this study, the concept of \textit{unfolding} is used to transform the gradient tensor of convolutional kernels into a matrix form.
\textit{Tensor unfolding}, also known as matricization or flattening, is the process of reshaping the elements of an $N$-dimensional tensor $\textsf{\textbf{X}}\in\mathbb{R}^{I_1\times\cdots\times I_N}$ into a matrix~\cite{kolda2009tensor}.
The mode-$n$ unfolding of an $N$-dimensional tensor $\textsf{\textbf{X}}\in\mathbb{R}^{I_1\times\cdots\times I_N}$ is defined as: 
\begin{equation}
    \textsf{\textbf{X}}\xrightarrow[\text{mode-n unfolding}]{}
    \mathbf{X}_{[n]}\in\mathbb{R}^{I_n\times I_M}\text{, where } I_M=\prod_{k\neq n}I_k
\end{equation}
For example, the weight tensor of a convolutional layer is represented as a 4-D tensor ($\textsf{\textbf{W}}\in\mathbb{R}^{ C_{\text{out}} \times C_{\text{in}} \times k_h \times k_w}$), where it is composed of kernels and it can be unfolded into a matrix as one of the following four forms: 
(1) $\mathbf{W}_{[1]}\in \mathbb{R}^{C_{\text{out}}\times (C_{\text{in}}k_h k_w)}$, 
(2) $\mathbf{W}_{[2]}\in \mathbb{R}^{C_{\text{in}}\times (C_{\text{out}}k_h k_w)}$, 
(3) $\mathbf{W}_{[3]}\in \mathbb{R}^{k_h\times (C_{\text{out}}C_{\text{in}}k_w)}$, 
(4) $\mathbf{W}_{[4]}\in \mathbb{R}^{k_w\times (C_{\text{out}}C_{\text{in}}k_h)}$. 

\subsection{Riemannian manifold}
An $n$-dimensional Riemannian manifold is defined by a manifold $\mathcal{M}$ and a Riemannian metric $g:\mathcal{M}\rightarrow\mathbb{R}^{n\times n}$, which is a smooth function from each point $\bm{x}\in\mathcal{M}$ to a positive definite matrix~\cite{lee2012smooth}. 
The metric $g(\bm{x})$ defines the inner product of two tangent vectors for each point of the manifold $\langle\cdot,\cdot\rangle:\mathcal{T}_{\bm{x}}\mathcal{M}\times\mathcal{T}_{\bm{x}}\mathcal{M}\rightarrow\mathbb{R}$, where $\mathcal{T}_{\bm{x}}\mathcal{M}$ is the tangent space of $\bm{x}$. For $\bm{v},\bm{w} \in \mathcal{T}_{\bm{x}}\mathcal{M}$, the inner product can be expressed $\langle \bm{v}, \bm{w} \rangle=\bm{v}^T g(\bm{x}) \bm{w}$.
A Riemannian manifold can be characterized by the curvature of the curves defined by a metric. 
The curvature of a Riemannian manifold can be computed at each point of the curves, while some manifolds have curvatures of a constant value. For example, the unit sphere $\mathcal{S}$ has constant positive curvature of $+1$.
\section{Methodology}
\label{methodology}
In this section, we propose a new preconditioned gradient descent method called GAP in the MAML framework.
In Section~\ref{sec:3.1}, we introduce GAP in the inner-loop optimization and describe how to meta-train GAP in the outer-loop optimization. 
In Section~\ref{sec:3.2}, we prove that GAP has two desirable properties.
In Section~\ref{sec:3.3}, we provide a low-computational approximation that can be useful for large-scale architectures.

\subsection{GAP: Geometry-Adaptive Preconditioner}
\label{sec:3.1}
\begin{algorithm}[t]
\scriptsize
    \caption{Geometry-Adaptive Preconditioned gradient descent (GAP)}\label{alg:metasvd}
    \begin{algorithmic}[1]
        \Require $p(\mathcal{T})$: distribution over tasks
        \Require $\alpha, \beta_1, \beta_2$: learning rates
        \State Randomly initialize $\theta=\{\textsf{\textbf{W}}^1,\cdots,\textsf{\textbf{W}}^L\}$. 
        \State Initialize $\phi=\{\mathbf{M}^1,\cdots,\mathbf{M}^L\}$ as identity matrix.
        \While{not converged}
            \State Sample a batch of tasks $\mathcal{T}_i \sim p(\mathcal{T})$
            \For{all $\tau\in\mathcal{T}_i$}
                \For{inner-loop step $k=0$ \textbf{to} $K-1$}
                    \For{layer $l=1$ \textbf{to} $L$}
                        \State Compute $\textsf{\textbf{G}}^l_{\tau,k}=\nabla_{\textsf{\textbf{W}}^l_{\tau,k}}\mathcal{L}^{\text{in}}_{\tau}(\theta_{\tau,k};D^{\text{tr}}_{\tau})$ using $D^{\text{tr}}_{\tau}$
                        \State Reshape $\textsf{\textbf{G}}^l_{\tau,k}$ to $\mathbf{G}^l_{\tau,k}$ via Eq.~(\ref{eqn:reshaping})
                        \State Transform $\mathbf{G}^l_{\tau,k}$ to $\mathbf{\tilde{G}}^l_{\tau,k}$ using $\mathbf{M}^l$ via Eq.~(\ref{eqn:precondition gradient})
                        \State Reshape $\mathbf{\tilde{G}}^l_{\tau,k}$ back to the original form of gradient tensor, $\tilde{\textsf{\textbf{G}}}^l_{\tau,k}$
                        \State Compute $l$-layer adapted weight:\ $\textsf{\textbf{W}}^l_{\tau,k+1}=\textsf{\textbf{W}}^l_{\tau,k}-\alpha\cdot\tilde{\textsf{\textbf{G}}}^l_{\tau,k}$
                    \EndFor
                \EndFor
                \State Compute $\mathcal{L}^{\text{out}}_{\tau}(\theta_{\tau, K};D^{\text{val}}_{\tau})$ by evaluating $\mathcal{L}^{\text{out}}_{\tau}$ w.r.t $D^{\text{val}}_{\tau}$.
            \EndFor
            \State Update the weights and meta parameters:
            \State $\theta \leftarrow \theta - \beta_{1}\nabla_{\theta}\sum_{\tau\in\mathcal{T}}\mathcal{L}_{\tau}^{\text{out}}(\theta_{\tau,K};D^{\text{val}}_{\tau})$
            \State $\phi \leftarrow \phi - \beta_2\nabla_{\phi}\sum_{\tau\in\mathcal{T}}\mathcal{L}_{\tau}^{\text{out}}(\theta_{\tau,K};D^{\text{val}}_{\tau})$
        \EndWhile
    \end{algorithmic}
\end{algorithm}

\subsubsection{Inner-loop optimization}
\label{sec:3.1.1}
We consider an $L$-layer neural network $f_{\theta}(\cdot)$ with parameters $\theta=\{\textsf{\textbf{W}}^1,\cdots,\textsf{\textbf{W}}^l,\cdots,\textsf{\textbf{W}}^L\}$. 
In the standard MAML with task $\tau\sim p(\mathcal{T})$, each $\textsf{\textbf{W}}^l$ is adapted with the gradient update as below:
\begin{equation}
    \textsf{\textbf{W}}^l_{\tau, K}\leftarrow \textsf{\textbf{W}}^l_{\tau,0} - \alpha\cdot\sum_{k=0}^{K-1}\textsf{\textbf{G}}^l_{\tau,k}\;\;s.t\;\;\textsf{\textbf{W}}^l_{\tau,0}=\textsf{\textbf{W}}^l,
\end{equation} 
where $\textsf{\textbf{G}}^l_{\tau,k}=\nabla_{\textsf{\textbf{W}}^l_{\tau,k}}\mathcal{L}^{\text{in}}_{\tau}(\theta_{\tau,k};D^{\text{tr}}_{\tau})$ is the gradient with respect to $\textsf{\textbf{W}}^l_{\tau,k}$ and $\alpha$ is the learning rate for inner-loop optimization.

In the GAP, we first use the mode-$1$ unfolding to reshape the gradient tensor into a matrix form~(see Section~\ref{sec:2.2}). 
For a convolutional layer (i.e., $\textsf{\textbf{G}}^l_{\tau,k} \in \mathbb{R}^{C_{\text{out}}\times C_{\text{in}} \times k \times k}$), we reshape the gradient tensor as below:
\begin{equation}
    \label{eqn:reshaping}
    \textsf{\textbf{G}}^l_{\tau,k}\xrightarrow[\substack{\text{mode-1}\\ \text{unfolding}}]{}
    \begin{cases}
        \mathbf{G}^l_{\tau,k}\in\mathbb{R}^{C_{\text{out}}\times C_{\text{in}}k^2} &  \text{if } C_{\text{out}} \leq C_{\text{in}}k^2\\
        \mathbf{G}^l_{\tau,k}\in\mathbb{R}^{C_{\text{in}}k^2\times C_{\text{out}}} &  \text{if } C_{\text{in}}k^2 < C_{\text{out}},\\
    \end{cases}
\end{equation} 
where $\mathbf{G}^l_{\tau,k}$ denotes $(\mathbf{G}^l_{\tau,k})_{[1]}$ for the notational brevity. 
Note that we chose mode-1 unfolding because it performs best among the four unfolding forms as shown in Table~\ref{tab:unfolding_performance}.
For a linear layer in a matrix form, there is no need for an unfolding.

Second, we transform the singular values of the gradient matrix using additional meta parameters $\phi=\{\mathbf{M}^l\}_{l=1}^{L}$. 
The meta parameter $\mathbf{M}^l$ are diagonal matrices with positive elements defined as:
\begin{equation}
    \mathbf{M}^l = 
    \begin{cases}
        \text{diag}(\text{Sp}(m^l_1),\cdots,\text{Sp}(m^l_{C_{\text{out}}})) & \text{if } C_{\text{out}} \leq C_{\text{in}}k^2\\
        \text{diag}(\text{Sp}(m^l_1),\cdots,\text{Sp}(m^l_{C_{\text{in}}k^2})) & \text{if } C_{\text{in}}k^2 < C_{\text{out}}\\
    \end{cases},
\end{equation}
where $m^l_i\in\mathbb{R}$ and $\text{Sp}(x)=\frac{1}{2}\cdot\log(1+\exp{(2*x)})$.
They are applied to the gradient matrix as follows:
\begin{equation}
\label{eqn:precondition gradient}
    \mathbf{\tilde{G}}^l_{\tau,k}=\mathbf{U}^l_{\tau,k}(\mathbf{M}^l\cdot \mathbf{\Sigma}^l_{\tau, k}){\mathbf{V}^l_{\tau,k}}^{\text{T}},
\end{equation} 
where $\mathbf{G}^l_{\tau,k}=\mathbf{U}^l_{\tau,k}\mathbf{\Sigma}^l_{\tau,k}{\mathbf{V}^l_{\tau,k}}^{\text{T}}$ is the singular value decomposition (SVD) of $\mathbf{G}^l_{\tau,k}$. 

Finally, we reshape $\mathbf{\tilde{G}}^l_{\tau,k}$ back to its original gradient tensor form $\tilde{\textsf{\textbf{G}}}^l_{\tau,k}$~(i.e., inverse unfolding).

The resulting preconditioned gradient descent of GAP becomes the following:
\begin{equation}
    \textsf{\textbf{W}}^l_{\tau, K}\leftarrow \textsf{\textbf{W}}^l_{\tau,0} - \alpha\cdot\sum_{k=0}^{K-1}\tilde{\textsf{\textbf{G}}}^l_{\tau,k}\;\;s.t\;\;\textsf{\textbf{W}}^l_{\tau,0}=\textsf{\textbf{W}}^l,
\end{equation}
where $\tilde{\textsf{\textbf{G}}}^l_{\tau,k}$ is the preconditioned gradient based on the meta parameters $\phi$.

\begin{table}[t]
    \caption{Performance comparison of unfolding types. We performed the experiment with 5-way 1-shot on mini-ImageNet and used the standard Conv-4 backbone.}
    \centering
    \resizebox{0.5\textwidth}{!}{
    \begin{tabular}{cccc}
        \hline
        mode-1  & mode-2 & mode-3 & mode-4 \\
        \hline
        $54.86 \pm 0.85$ & $53.02 \pm 0.87$ & $51.23 \pm 0.76$ & $51.45 \pm 0.77$\\
        \hline
    \end{tabular}
    }
    \label{tab:unfolding_performance}
\end{table}

\subsubsection{Outer-loop optimization}
\label{sec:3.1.2}
For outer-loop optimization, GAP follows the typical process of MAML.
Unlike MAML, however, GAP meta-learns two meta parameter sets $\theta$ and $\phi$ as follows:
\begin{align}
    & \theta \leftarrow \theta - \beta_{1}\nabla_{\theta}{\mathbb{E}}_{\tau}\left[\mathcal{L}_{\tau}^{\text{out}}(\theta_{\tau,K};D^{\text{val}}_{\tau})\right],\\
    & \phi \leftarrow \phi - \beta_2\nabla_{\phi}{\mathbb{E}}_{\tau}\left[\mathcal{L}_{\tau}^{\text{out}}(\theta_{\tau,K};D^{\text{val}}_{\tau})\right],
\end{align}
where $\beta_1$ and $\beta_2$ are the learning rates for the outer-loop optimization.
We initialize $\mathbf{M}^l$ as an identity matrix for all $l$. 
The training procedure is provided in Algorithm~\ref{alg:metasvd}.

\subsection{Desirable properties of GAP}
\label{sec:3.2}
In this section, we prove that GAP's preconditioner $\mathbf{P}_{\text{GAP}}$ satisfies two desirable properties.
\begin{theorem}
    \label{thm:sec.3.2}
    Let $\mathbf{\tilde{G}}^l_{\tau,k}\in\mathbb{R}^{m\times n}$ be the `$l$-layer $k$-th inner-step' gradient matrix transformed by meta parameter $\mathbf{M}^l$ for task $\tau$.
    Then preconditioner $\mathbf{P}_{\text{GAP}}$ induced by $\mathbf{\tilde{G}}^l_{\tau,k}$ is a Riemannian metric and depends on the task-specific parameters $\theta_{\tau,k}$.
\end{theorem}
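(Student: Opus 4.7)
The plan is to first extract an explicit closed form for $\mathbf{P}_{\text{GAP}}$ from the update rule, then verify the two required properties (Riemannian metric condition and dependence on $\theta_{\tau,k}$) separately using the construction of $\mathbf{M}^l$ via the softplus function and the SVD-based preconditioning.

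First, I would rewrite Eq.~(\ref{eqn:precondition gradient}) so that $\mathbf{P}_{\text{GAP}}$ appears as an explicit matrix factor multiplying the raw gradient. Substituting the SVD $\mathbf{G}^l_{\tau,k}=\mathbf{U}^l_{\tau,k}\mathbf{\Sigma}^l_{\tau,k}(\mathbf{V}^l_{\tau,k})^T$ into $\mathbf{\tilde{G}}^l_{\tau,k}=\mathbf{U}^l_{\tau,k}(\mathbf{M}^l\mathbf{\Sigma}^l_{\tau,k})(\mathbf{V}^l_{\tau,k})^T$ and inserting $(\mathbf{U}^l_{\tau,k})^T\mathbf{U}^l_{\tau,k}=\mathbf{I}$ in the middle, we get
\[
\mathbf{\tilde{G}}^l_{\tau,k} \;=\; \bigl(\mathbf{U}^l_{\tau,k}\mathbf{M}^l(\mathbf{U}^l_{\tau,k})^T\bigr)\,\mathbf{G}^l_{\tau,k},
\]
so the induced preconditioner is $\mathbf{P}_{\text{GAP}}(\theta_{\tau,k};\phi)=\mathbf{U}^l_{\tau,k}\mathbf{M}^l(\mathbf{U}^l_{\tau,k})^T$. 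Note that by the unfolding rule in Eq.~(\ref{eqn:reshaping}), the first dimension of $\mathbf{G}^l_{\tau,k}$ is always the smaller one, which guarantees that $\mathbf{U}^l_{\tau,k}$ is a genuine square orthogonal matrix (rather than merely having orthonormal columns), so the expression above really is a full-rank conjugation of $\mathbf{M}^l$.

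Second, I would verify the Riemannian metric condition point-by-point against the definition recalled in Section~2.4: symmetry, positive definiteness, and smooth dependence on the base point. Symmetry is immediate because $\mathbf{M}^l$ is diagonal and $(\mathbf{U}\mathbf{M}\mathbf{U}^T)^T=\mathbf{U}\mathbf{M}\mathbf{U}^T$. Positive definiteness follows by observing that $\mathrm{Sp}(x)=\tfrac{1}{2}\log(1+e^{2x})>0$ for every real $x$, so $\mathbf{M}^l$ has strictly positive diagonal entries; hence for any nonzero vector $\bm v$, $\bm v^T\mathbf{P}_{\text{GAP}}\bm v=(\mathbf{U}^{lT}\bm v)^T\mathbf{M}^l(\mathbf{U}^{lT}\bm v)>0$ since $\mathbf{U}^l_{\tau,k}$ is invertible. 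For smooth dependence on the parameters, I would invoke the standard fact that the SVD is smooth (at generic points where singular values are simple), so $\mathbf{U}^l_{\tau,k}$, and therefore $\mathbf{P}_{\text{GAP}}$, depends smoothly on $\theta_{\tau,k}$ via the gradient map $\theta_{\tau,k}\mapsto \mathbf{G}^l_{\tau,k}$.

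Third, the task-specific, path-dependent nature of $\mathbf{P}_{\text{GAP}}$ is almost automatic once the closed form is written down: $\mathbf{U}^l_{\tau,k}$ is obtained from the SVD of $\mathbf{G}^l_{\tau,k}=\nabla_{\textsf{\textbf{W}}^l_{\tau,k}}\mathcal{L}^{\text{in}}_\tau(\theta_{\tau,k};D^{\text{tr}}_\tau)$, which depends jointly on the task-specific parameters $\theta_{\tau,k}$ and the task data $D^{\text{tr}}_\tau$, so $\mathbf{P}_{\text{GAP}}=\mathbf{P}_{\text{GAP}}(\theta_{\tau,k};\phi)$ as claimed. I expect the only subtle point, and the main obstacle, to be the smoothness requirement at points where the singular values of $\mathbf{G}^l_{\tau,k}$ collide, since the orthogonal factor $\mathbf{U}^l_{\tau,k}$ can then fail to be uniquely determined; I would handle this either by restricting to the open dense set of generic points (sufficient for a Riemannian metric on that subset) or by noting that only the projectors onto eigenspaces of $\mathbf{G}^l_{\tau,k}(\mathbf{G}^l_{\tau,k})^T$ enter $\mathbf{P}_{\text{GAP}}$, which are smooth wherever the eigenvalues are simple and continuous everywhere.
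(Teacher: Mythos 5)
Your proposal is correct and follows essentially the same route as the paper: you insert $\mathbf{U}^{T}\mathbf{U}=\mathbf{I}$ to obtain $\mathbf{P}_{\text{GAP}}=\mathbf{U}^l_{\tau,k}\mathbf{M}^l(\mathbf{U}^l_{\tau,k})^{T}$ acting on the raw gradient and then check symmetry, positive definiteness (strict positivity of the softplus entries of $\mathbf{M}^l$), and dependence on $\theta_{\tau,k}$ through $\mathbf{U}^l_{\tau,k}$, while the paper only differs cosmetically by packaging the preconditioner as $\text{blkdiag}(\mathbf{D}^l_{\tau,k},\cdots,\mathbf{D}^l_{\tau,k})$ acting on the vectorized gradient and proving definiteness via similarity to $\mathbf{M}^l$ together with a block-diagonal lemma. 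Your explicit handling of SVD smoothness at points with repeated singular values is in fact more careful than the paper, which simply asserts that the map $\theta_{\tau,k}\mapsto\mathbf{P}_{\text{GAP}}$ is smooth.
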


The proof and the closed form of $\mathbf{P}_{\text{GAP}}$ are provided in Section~\ref{subsec:proof_of_thm1}.
The following two properties emerge from the theorem.
\newline
\textbf{Property 1. Dependency on task-specific parameters:}
Theorem~\ref{thm:sec.3.2} formally shows that $\mathbf{P}_{\text{GAP}}$ depends on the task-specific parameters $\theta_{\tau,k}$. While the previous studies considered a non-adaptive preconditioner $\mathbf{P}(\phi)$~\cite{li2017meta, lee2018gradient, park2019meta, von2021learning, zhao2020meta, rajasegaran2020meta} and a partially adaptive preconditioner $\mathbf{P}(k;\phi)$~\cite{rajasegaran2020meta} or $\mathbf{P}(D_{\tau}^{\text{tr}};\phi)$~\cite{simon2020modulating}, $\mathbf{P}_{\text{GAP}}$ can be considered to be the most advanced adaptive preconditioner because it is fully adaptive (i.e., task-specific and path-dependent) by being dependent on $\theta_{\tau,k}$ as shown in Fig.~\ref{fig:dependence_concept}.
\newline
\textbf{Property 2. Riemannian metric:}
If the parameter space has a certain underlying structure, the ordinary gradient of a function $\nabla\mathcal{L}$ does not represent its steepest direction~\cite{amari1998natural}.
To define the steepest direction on the parameter space, we need a Riemannian metric $g(\boldsymbol{w})$, which is a positive-definite matrix defined for each parameter $\boldsymbol{w}$.
A Riemannian metric defines the steepest descent direction by $-g(\boldsymbol{w})^{-1}\nabla\mathcal{L}$~\cite{amari1998natural}.
If a preconditioning matrix is a Riemannian metric, it defines the geometry of the underlying structure and enables steepest descent learning.
Because we prove that $\mathbf{P}_{\text{GAP}}$ is a Riemannian metric for each parameter in Theorem~\ref{thm:sec.3.2}, $\mathbf{P}_{\text{GAP}}$ is theoretically guaranteed to enable steepest descent learning on its corresponding parameter space.
$\mathbf{P}_{\text{GAP}}$ consists of two factors, a unitary matrix of the inner-loop gradient $\mathbf{U}_{\tau,k}$ and a meta-parameter $\mathbf{M}$; 
$\mathbf{M}$ enables us to reflect the shared geometry information across the tasks. 
Task-specific and path-dependent geometry information can be reflected in the metric through $\mathbf{U}_{\tau,k}$.
Two factors allow our Riemannian metric to have higher function complexity than a constant metric. 
For example, we can consider various structures other than a unit-sphere that corresponds to the constant metric of $+1$.
Even though $\mathbf{P}_{\text{GAP}}$ is guaranteed to be a Riemannian metric, it is crucial that the meta-learned $\mathbf{P}_{\text{GAP}}$ corresponds to the true parameter space or at least $\mathbf{P}_{\text{GAP}}$ is close enough to be useful. We will discuss this issue in Section~\ref{discussion1}.

\subsection{Approximate GAP: a low-computational approximation of GAP}
\label{sec:3.3}
As presented in Section~\ref{sec:3.1}, GAP uses an SVD operation. The SVD operation can be burdensome for large-scale networks because it implies that the computational cost can be significantly increased. 
In recent studies, the use of a large-scale architecture has been emphasized as the key factor for improving the performance of meta-learning~\cite{hu2022pushing}. 
To make use of GAP for large-scale architectures without causing a computational problem, we provide an efficient approximation, named \textit{Approximate GAP}, under an assumption. 
\begin{assumption}
    \label{assumption}
    The elements of the gradient matrix follow an i.i.d. normal distribution with zero means.
\end{assumption}

Following \cite{wiedemann2020dithered, m2021efficient}, we adopt the assumption to have the gradient matrix become orthogonal as $n$ increases. Although the utilization of the assumption is a limiting factor, we empirically confirmed that the row vectors of the gradient matrix are indeed asymptotically orthogonal as $n$ increases~(see Fig.~\ref{fig:approx_size_diff}).

For the assumption, our approximation can be established as the following.
\begin{theorem}
    \label{theorem:approx}
    Let $\mathbf{G}\in\mathbb{R}^{m\times n}$ be a gradient matrix and $\mathbf{\tilde{G}}$ be the gradient transformed by meta parameter $\mathbf{M}$. Under the Assumption~\ref{assumption}, as $n$ becomes large, $\mathbf{\tilde{G}}$ asymptotically becomes equivalent to $\mathbf{M}\mathbf{G}$ as follows:
    \begin{equation}
        \mathbf{\tilde{G}} \cong \mathbf{M}\mathbf{G}
    \end{equation}
\end{theorem}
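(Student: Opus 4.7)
The plan is to exploit Assumption~\ref{assumption} to argue that the gradient matrix $\mathbf{G}$ becomes asymptotically row-orthogonal, which in turn forces the left singular basis in the SVD of $\mathbf{G}$ to collapse to the identity (up to permutation), so that $\mathbf{\tilde{G}} = \mathbf{U}(\mathbf{M}\mathbf{\Sigma})\mathbf{V}^T$ reduces to $\mathbf{M}\mathbf{G}$ in the large-$n$ limit. Quantitatively, under Assumption~\ref{assumption}, for any two distinct rows $i\neq k$ the inner product $\langle \mathbf{g}_i,\mathbf{g}_k\rangle = \sum_{j=1}^n G_{ij}G_{kj}$ has mean zero and variance $n\sigma^4$, whereas each $\|\mathbf{g}_i\|^2$ concentrates at $n\sigma^2$. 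Dividing yields a normalized inner product of order $O_p(1/\sqrt{n})$, which is precisely the asymptotic row-orthogonality the authors invoke just before the theorem statement and verify empirically.

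Next, I would translate this concentration into a structural statement about the SVD. In the idealized row-orthogonal regime, $\mathbf{G}\mathbf{G}^T = \mathbf{D}^2$ is exactly diagonal with $\mathbf{D}=\mathrm{diag}(\|\mathbf{g}_i\|)$, so the eigenvectors of $\mathbf{G}\mathbf{G}^T$ coincide with the standard basis vectors ordered by eigenvalue. Consequently the left singular matrix $\mathbf{U}$ equals a permutation $\mathbf{\Pi}$, and the right singular factor is $\mathbf{V}^T = \mathbf{D}^{-1}\mathbf{G}$. Absorbing the permutation into the row indexing, I obtain the clean factorization $\mathbf{G} = \mathbf{I}_m\mathbf{D}\mathbf{V}^T$, and substituting this into the GAP transform gives
\begin{equation*}
    \mathbf{\tilde{G}} \;=\; \mathbf{I}_m\,(\mathbf{M}\mathbf{D})\,\mathbf{V}^T \;=\; \mathbf{M}\mathbf{D}\mathbf{V}^T \;=\; \mathbf{M}\mathbf{G},
\end{equation*}
which is the claimed identity. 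For finite but large $n$, the off-diagonal entries of $\mathbf{G}\mathbf{G}^T$ are of order $\sqrt{n}$ against a diagonal scale of $n\sigma^2$, giving a relative residual of order $O(1/\sqrt{n})$ in the Frobenius norm between $\mathbf{\tilde{G}}$ and $\mathbf{M}\mathbf{G}$, which is the quantitative form of the asymptotic equivalence $\cong$.

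The main obstacle is the near-degeneracy of the singular spectrum: because all $\|\mathbf{g}_i\|^2$ concentrate at the common value $n\sigma^2$, the singular values of $\mathbf{G}$ are asymptotically equal, and the individual factors $\mathbf{U},\mathbf{V}$ are themselves unstable even though $\mathbf{\tilde{G}}$ is well-behaved, so off-the-shelf Davis--Kahan bounds become too weak to control $\mathbf{U}$ directly. I would sidestep this by working with the natural factorization that diagonalizes the (asymptotically diagonal) matrix $\mathbf{G}\mathbf{G}^T$ rather than with a generic SVD, which pins $\mathbf{U}$ to the identity up to permutation without invoking sharp eigenvector-perturbation estimates, and by reading the equivalence $\mathbf{\tilde{G}}\cong\mathbf{M}\mathbf{G}$ as a statement about this canonical realization of the GAP update.
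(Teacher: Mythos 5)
Your proposal follows essentially the same route as the paper's proof: both factor $\mathbf{G}=\mathrm{diag}(\lVert\bm{g}_1\rVert,\dots,\lVert\bm{g}_m\rVert)\,\mathbf{V}^{T}$ with the normalized rows asymptotically orthonormal (the paper obtains this by mapping the rows to the unit sphere and applying Chebyshev via Lemmas~\ref{lemma1}--\ref{lemm2}, you by a direct moment computation), treat this as an approximate SVD with $\mathbf{U}\cong\mathbf{I}$, and substitute into $\mathbf{U}(\mathbf{M}\mathbf{\Sigma})\mathbf{V}^{T}$ to conclude $\mathbf{\tilde{G}}\cong\mathbf{M}\mathbf{G}$. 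The near-degeneracy of the singular spectrum you flag is a genuine subtlety, but the paper's proof makes exactly the same identification of this canonical factorization with the computed SVD without comment, so your argument is, if anything, more explicit about the step being taken.
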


Note that we have chosen the larger dimension of the gradient matrix as $n$ when reshaping with Eq.~(\ref{eqn:reshaping}).
The proof is provided in Section~\ref{subsec:proof_of_thm2}.
This approximation has a clear trade-off between scalability and adaptiveness. Approximate GAP efficiently reduces the computational cost as shown in Table~\ref{tab:computational_comparison}, whereas the preconditioner $\mathbf{P}_{\text{GAP}}$ becomes not adaptive but constant.
However, Approximate GAP still guarantees the preconditioner to be a Riemannian metric.
As shown in Table~\ref{tab:miniImageNet} \& \ref{tab:tiered-ImageNet}, Approximate GAP incurs a slight performance drop but it still achieves a high performance owing to the Riemannian metric property.

\begin{figure}[t!]
  \centering
  \includegraphics[width=0.40\textwidth]{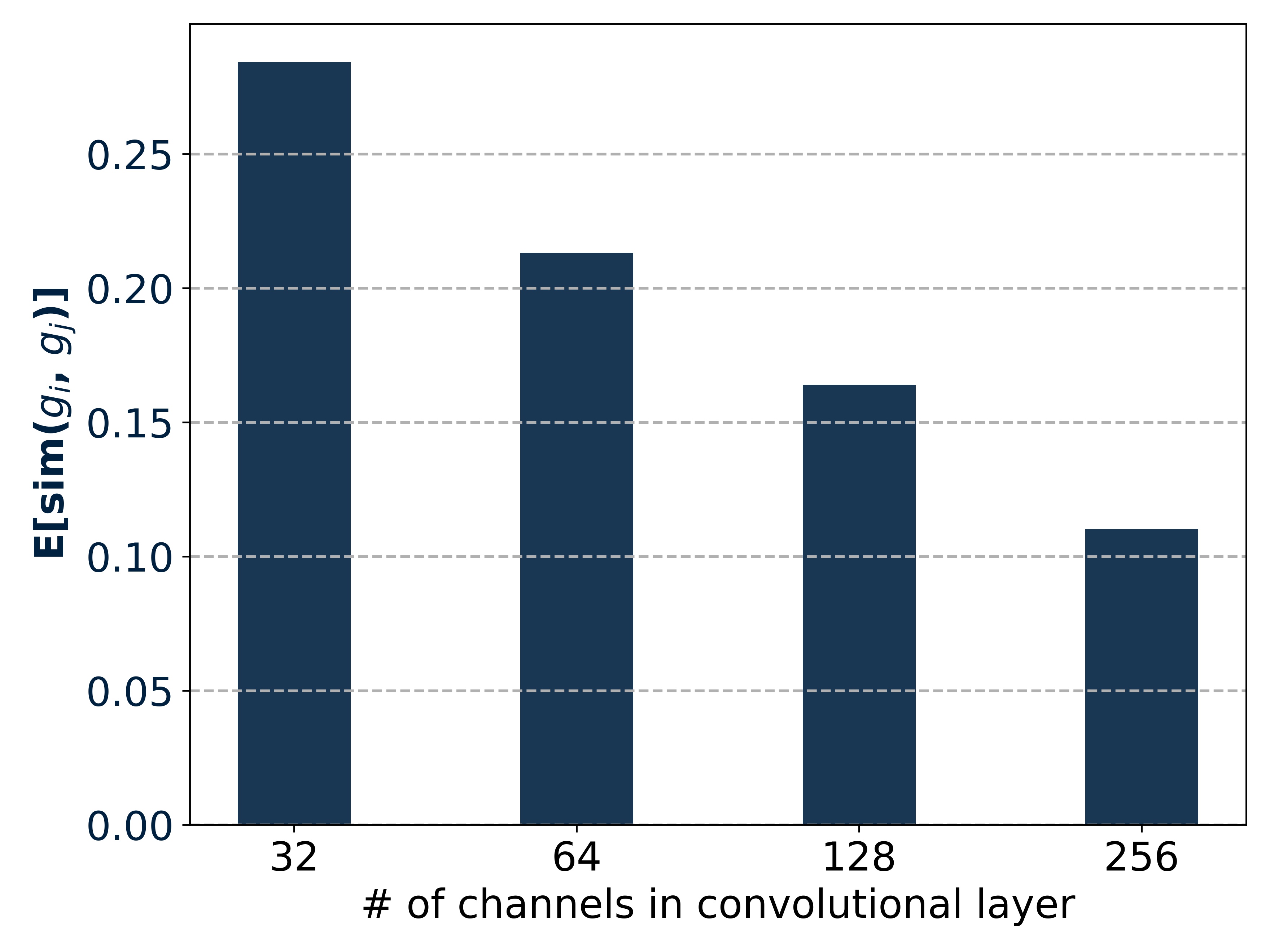}
  \caption{The average of cosine similarity between row vectors of gradient matrix as $n$ increases.}
  \label{fig:approx_size_diff}
\end{figure}

\begin{table}[t!]
  \caption{Comparison of training time, GPU memory, and test time for MAML, GAP, and Approximate GAP. We performed the experiment with 5-way 1-shot on mini-ImageNet and used 600 tasks in the test. We used the standard Conv-4 backbone.}
  \centering
  \resizebox{0.5\textwidth}{!}{\begin{tabular}{lccc}
        \hline
        Algorithm         & train time (secs)              & test time (secs)      & GPU-memory (MB) \\
        \hline
        MAML              & $35796.61$ & $52.66$  & $10185$\\ 
        GAP              & $50916.63$ & $120.44$ & $10279$\\
        Approximate GAP    & $36684.72$ & $53.62$  & $10217$\\
        \hline
  \end{tabular}
  }
  \label{tab:computational_comparison}
\end{table}

\section{Proofs of theorems}
\label{proofs}
In this section, we provide proofs of the theorems stated in Section~\ref{sec:3.2} and \ref{sec:3.3}. We first provide a definition and three lemmas, and provide the proofs.
\subsection{A matrix similarity}
In linear algebra, the matrix similarity can be defined as below~\cite{strang2012linear}.
\begin{definition}
\label{def:similar_matrix}
    Two $n \times n$ matrices $A$ and $B$ are similar if there exists an invertible $n \times n$ matrix $P$ such that
    \begin{equation}
        B=P^{-1}AP
    \end{equation}
\end{definition}
Similar matrices represent the same linear transformation under different bases, with $P$ being the change of basis matrix. Therefore, similar matrices share all the properties of their common underlying transformation, such as rank, determinant, eigenvalues, and their algebraic multiplicities.

\subsection{Three lemmas}
In this section, we prove three lemmas that will be utilized in the main proofs.
\begin{lemma}
\label{lem:block}
    Let $A=\text{blkdiag}(A_1,\cdots,A_n)$ be a block diagonal matrix such that the main-diagonal blocks $A_i$ are $k \times k$ positive definite matrices.
    Then $A$ is a positive definite matrix.
\end{lemma}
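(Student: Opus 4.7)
The plan is to prove the lemma directly from the definition of positive definiteness, which is the cleanest route since the block structure makes the relevant quadratic form split as a sum of smaller quadratic forms.

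First I would fix an arbitrary non-zero vector $x \in \mathbb{R}^{nk}$ and partition it conformally with the block structure of $A$, writing $x = (x_1^T, x_2^T, \ldots, x_n^T)^T$ with each $x_i \in \mathbb{R}^k$. Because $A$ is block-diagonal with blocks $A_1, \ldots, A_n$, the quadratic form cleanly decouples, giving
\begin{equation}
x^T A x \;=\; \sum_{i=1}^{n} x_i^T A_i x_i.
\end{equation}
Each summand is non-negative by the positive definiteness of $A_i$, so the whole sum is non-negative, which already establishes positive semi-definiteness.

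Next I would upgrade this to strict positivity. Since $x \neq 0$, at least one block $x_{i_0}$ is non-zero. For that index, positive definiteness of $A_{i_0}$ gives $x_{i_0}^T A_{i_0} x_{i_0} > 0$, while the remaining summands are non-negative, so the total is strictly positive. Combined with the obvious symmetry of $A$ (a block-diagonal matrix whose blocks are symmetric, since positive definite matrices are symmetric by the convention used in the paper), this is exactly the definition of $A$ being positive definite.

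I do not expect a serious obstacle: the result is essentially bookkeeping once the conformal partition is chosen. The only subtlety worth flagging is the symmetry convention (some authors allow non-symmetric positive definite matrices, in which case the argument above still handles the quadratic-form side, but symmetry of $A$ must be argued from symmetry of the $A_i$); as an alternative one could instead invoke the spectral characterization, noting that the spectrum of a block-diagonal matrix is the union of the spectra of its blocks, so positivity of every eigenvalue of each $A_i$ yields positivity of every eigenvalue of $A$.
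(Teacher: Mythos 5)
Your proposal is correct and follows essentially the same route as the paper's proof: decompose the quadratic form as $x^T A x = \sum_{i=1}^{n} x_i^T A_i x_i$ and verify symmetry of $A$ from symmetry of the blocks. In fact you are slightly more careful than the paper's write-up, since you explicitly handle the possibility that some sub-blocks $x_i$ vanish by isolating a non-zero block $x_{i_0}$ to get strict positivity, whereas the paper asserts the strict inequality directly.
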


\begin{proof}
    First, we show that $A$ is a positive definite matrix. For all non-zero $x=(x_1,\cdots,x_n)\in\mathbb{R}^{nk}$ where $x_i\in\mathbb{R}^k$, we can derive the following. 
    \begin{equation}
        \begin{split}
            x^T A x & = x^T \text{blkdiag}(A_1,\cdots,A_n) x \\
                    & = x_1^T A_1 x_1 + \cdots x_n^T A_n x_n \\
                    & > 0\;(\because A_i\text{ is a positive definite})
        \end{split}
    \end{equation}
    Next, we show that $A$ is a symmetric matrix. Since $A_i$ is a symmetric matrix (i.e., $A_i=A_i^T$), we find that the following is satisfied. 
    \begin{equation}
        \begin{split}
            A^T & = \text{blkdiag}(A_1,\cdots,A_n)^T\\
                & = \text{blkdiag}(A_1^T,\cdots,A_n^T)\\
                & = \text{blkdiag}(A_1,\cdots,A_n)\\
                & = A
        \end{split}
    \end{equation}
    Hence, $A$ is a symmetric matrix.
    Therefore, $A$ is a positive definite matrix.
\end{proof}

\begin{lemma}
\label{lemma1}
If a random vector $\bm{x}=(X_1,\cdots,X_n)$ $\in \mathbb{R}^n$ follows an uniform distribution on the $(n-1)$-dimensional unit sphere, the variance of the random variable $X_i$ satisfies the following.
\begin{equation}
    \mathbb{V}(X_{i}) = \frac{1}{n}
\end{equation}
\end{lemma}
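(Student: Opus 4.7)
The plan is to avoid computing any integrals and instead exploit two symmetries of the uniform distribution on $S^{n-1}$: invariance under coordinate permutations and invariance under the antipodal map $\bm{x}\mapsto-\bm{x}$. Both follow from the fact that the uniform distribution on the sphere is invariant under the full orthogonal group $O(n)$, so in particular it is invariant under any isometry that permutes or negates coordinates.

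First I would use permutation symmetry to conclude that $X_1,\ldots,X_n$ are identically distributed (they are emphatically not independent, but identical marginals suffice here). In particular, $\mathbb{E}[X_i^2]$ does not depend on $i$.

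Next I would couple this with the defining constraint of the sphere, $\sum_{i=1}^{n} X_i^2 = 1$ almost surely. Taking expectations on both sides and pulling the sum outside by linearity gives
\begin{equation}
    1 \;=\; \mathbb{E}\!\left[\sum_{i=1}^{n} X_i^2\right] \;=\; \sum_{i=1}^{n}\mathbb{E}[X_i^2] \;=\; n\,\mathbb{E}[X_i^2],
\end{equation}
so $\mathbb{E}[X_i^2] = 1/n$. For the first moment, I would invoke the sign-flip symmetry: since $\bm{x}$ and $-\bm{x}$ have the same distribution on the sphere, so do $X_i$ and $-X_i$, which forces $\mathbb{E}[X_i] = 0$. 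Combining the two yields $\mathbb{V}(X_i) = \mathbb{E}[X_i^2] - (\mathbb{E}[X_i])^2 = 1/n$.

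There is no real obstacle here; the only thing to be careful about is to justify that the uniform distribution on $S^{n-1}$ is genuinely invariant under permutations and sign flips, which is immediate from its standard characterization as the unique Borel probability measure on $S^{n-1}$ invariant under $O(n)$. Everything else is linearity of expectation plus the almost-sure constraint $\|\bm{x}\|^2=1$.
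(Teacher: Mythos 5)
Your proposal is correct and follows essentially the same route as the paper's proof: identical marginals from symmetry, zero mean, and taking the expectation of the almost-sure constraint $\sum_i X_i^2 = 1$ to get $n\,\mathbb{E}[X_i^2]=1$. The only difference is cosmetic — you explicitly justify the identical-distribution and zero-mean claims via $O(n)$-invariance (permutation and antipodal symmetry), which the paper simply asserts.
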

\begin{proof}
Since $X_{1}, \cdots, X_{n}$ follow an identical distribution, $\mathbb{V}(X_{i}) = \mathbb{V}(X_{j})$ holds for all $i, j$.
Thus,
\begin{equation}
\label{eq:nvar}
    \begin{split}
        n\mathbb{V}(X_{i}) &= \sum_{i=1}^{n}\mathbb{V}(X_{i}).
    \end{split}
\end{equation}
Then, we derive the sum of variance as follows:
\begin{equation}
\label{eq:sumvar}
    \begin{split}
        \sum_{i=1}^{n}\mathbb{V}(X_{i})
    &= \sum_{i=1}^{n}\mathbb{E}(X_{i}^2) \text{ ($\because \mathbb{E}(X)=0$)} \\
    &= \mathbb{E}(\sum_{i=1}^{n}X_{i}^2) \\
    &= \mathbb{E}(||X||^2_2) \\
    &= 1.
    \end{split}
\end{equation}
By Eq.~(\ref{eq:nvar}) and~(\ref{eq:sumvar}), we have
\begin{equation}
    \begin{split}
        \mathbb{V}(X_{i}) &= \frac{1}{n}.
    \end{split}
\end{equation}
\end{proof} 

\begin{lemma}
\label{lemm2}
If two independent random vectors $\bm{x}=(X_1,\cdots,X_n)$, $\bm{y}=(Y_1,\cdots,Y_n)$ $\in \mathbb{R}^n$ follow a uniform distribution on the $(n-1)$-dimensional unit sphere, then 
\begin{equation}
    P(|\langle \bm{x}, \bm{y}\rangle| > \epsilon) \leq \frac{1}{n\epsilon^2}.
\end{equation}
\end{lemma}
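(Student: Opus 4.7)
The plan is to apply Markov's inequality to the non-negative random variable $|\langle \bm{x},\bm{y}\rangle|^{2}$ and reduce the problem to computing the second moment of the inner product. Concretely, I will start from
\[
P(|\langle \bm{x},\bm{y}\rangle| > \epsilon) \;=\; P(|\langle \bm{x},\bm{y}\rangle|^{2} > \epsilon^{2}) \;\leq\; \frac{\mathbb{E}\bigl[|\langle \bm{x},\bm{y}\rangle|^{2}\bigr]}{\epsilon^{2}},
\]
so the entire proof reduces to showing $\mathbb{E}\bigl[|\langle \bm{x},\bm{y}\rangle|^{2}\bigr] = 1/n$.

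For that expectation I will expand the inner product as $\langle \bm{x},\bm{y}\rangle = \sum_{i} X_{i}Y_{i}$ and use independence of $\bm{x}$ and $\bm{y}$ to factor the cross-terms, yielding
\[
\mathbb{E}\bigl[|\langle \bm{x},\bm{y}\rangle|^{2}\bigr] \;=\; \sum_{i,j} \mathbb{E}(X_{i}X_{j})\,\mathbb{E}(Y_{i}Y_{j}).
\]
The off-diagonal terms $i\neq j$ vanish by the symmetry of the uniform distribution on the sphere: the map $(X_{1},\dots,X_{n}) \mapsto (X_{1},\dots,-X_{i},\dots,X_{n})$ preserves the distribution, which forces $\mathbb{E}(X_{i}X_{j})=0$ for $i\neq j$. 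For the diagonal terms $i=j$, I invoke Lemma~\ref{lemma1} directly: since $\mathbb{E}(X_{i})=0$ by the same symmetry argument, $\mathbb{E}(X_{i}^{2}) = \mathbb{V}(X_{i}) = 1/n$, and similarly for $Y_{i}$.

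Combining these facts gives $\mathbb{E}\bigl[|\langle \bm{x},\bm{y}\rangle|^{2}\bigr] = n \cdot (1/n)\cdot(1/n) = 1/n$, and plugging this back into Markov's inequality yields the claimed bound $1/(n\epsilon^{2})$.

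The argument is essentially a second-moment computation, so there is no serious obstacle; the only point requiring a touch of care is the justification that $\mathbb{E}(X_{i}X_{j}) = 0$ for $i\neq j$ and $\mathbb{E}(X_{i}) = 0$, both of which follow cleanly from the reflection symmetry of the uniform measure on the sphere. Everything else is bookkeeping: swapping expectation and summation, invoking independence to split the product of expectations, and citing Lemma~\ref{lemma1} for the diagonal variances.
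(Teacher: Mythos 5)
Your proposal is correct. It shares the paper's overall strategy -- show the inner product has second moment $1/n$ and then apply a Chebyshev/Markov-type bound -- but the way you obtain the second moment is genuinely different. The paper exploits rotational invariance: conditioning on $\bm{y}$ and rotating coordinates so that $\bm{y}=(1,0,\cdots,0)$, it identifies $\langle \bm{x},\bm{y}\rangle$ with the single coordinate $X_1$, reads off $\mathbb{E}[X_1]=0$ and $\mathbb{V}[X_1]=1/n$ from Lemma~\ref{lemma1}, and concludes by Chebyshev. You instead expand $\langle\bm{x},\bm{y}\rangle^2=\sum_{i,j}X_iX_jY_iY_j$, use independence of $\bm{x}$ and $\bm{y}$ to factor each term, kill the off-diagonal terms $\mathbb{E}(X_iX_j)=0$ ($i\neq j$) by the sign-flip symmetry of the uniform measure on the sphere, and evaluate the diagonal terms via Lemma~\ref{lemma1}, giving $\mathbb{E}[\langle\bm{x},\bm{y}\rangle^2]=n\cdot\frac{1}{n}\cdot\frac{1}{n}=\frac{1}{n}$ before applying Markov to the squared variable (equivalent to Chebyshev here since the mean is zero). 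The paper's route is shorter and more geometric, but its ``rotate so that $\bm{y}=e_1$'' step quietly relies on conditioning on $\bm{y}$ together with the rotational invariance of the law of $\bm{x}$; your computation is more elementary and makes the role of independence and symmetry explicit, at the cost of a slightly longer moment calculation. Both yield exactly the bound $\frac{1}{n\epsilon^2}$ (for $\epsilon>0$, which is implicit in the paper as well).
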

\begin{proof}
Since we can rotate coordinate so that $\bm{y}=(1,0,\cdots,0) \in \mathbb{R}^{n}$, we have
\begin{equation}
\label{eq:simple}
    \langle \bm{x}, \bm{y} \rangle = X_1.
\end{equation}
Following Eq.~(\ref{eq:simple}), we show that its expectation is equal to:
\begin{equation}
    \begin{split}
        \mathbb{E}[\langle \bm{x}, \bm{y} \rangle] &= \mathbb{E}[X_1],\\
        &= 0
    \end{split}
\end{equation}
and its variance is equal to:
\begin{equation}
   \begin{split}
        \mathbb{V}[\langle \bm{x}, \bm{y} \rangle] &= \mathbb{V}[X_1],\\
    &= \frac{1}{n} \text{\ \ (by Lemma~\ref{lemma1})}.
   \end{split}
\end{equation}
By applying Chebyshev's inequality~\cite{bienayme1853considerations} on $\langle \bm{x}, \bm{y} \rangle$, we have
\begin{equation}
    \label{lemma:conclusion}
    P(|\langle \bm{x}, \bm{y}\rangle| \geq \frac{k}{\sqrt{n}}) \leq \frac{1}{k^2},
\end{equation} for any real number $k>0$. Let $\frac{k}{\sqrt{n}}$ be a $\epsilon$. Then we rewrite the in Eq.~(\ref{lemma:conclusion}) as follows:
\begin{equation}
    P(|\langle \bm{x}, \bm{y}\rangle| \geq \epsilon) \leq \frac{1}{n\epsilon^2}.
\end{equation}
This result indicates that the two vectors $\bm{x}$ and $\bm{y}$ become asymptotically orthogonal as $n$ increases.
\end{proof}

\subsection{Proof of Theorem~1}
\label{subsec:proof_of_thm1}
\begin{proof}
We can rewrite the $\mathbf{\tilde{G}}^l_{\tau,k}$ as follows:
\begin{equation}
\label{eq:thm1_eqn1}
    \begin{split}
        \mathbf{\tilde{G}}^{l}{}_{\tau, k}
    & = \mathbf{U}^{l}{}_{\tau, k}(\mathbf{M}^{l}\cdot\mathbf{\Sigma}^{l}{}_{\tau, k})\mathbf{V}^{l}{}_{\tau, k}{}^{T}\\
    & = (\mathbf{U}^{l}{}_{\tau, k}\mathbf{M}^{l}\mathbf{U}^{l}{}_{\tau, k}{}^{T})\mathbf{U}^{l}{}_{\tau, k}\mathbf{\Sigma}^{l}{}_{\tau, k}\mathbf{V}^{l}{}_{\tau, k}{}^{T}\\
    & = \mathbf{D}^{l}{}_{\tau, k}\mathbf{G}^{l}{}_{\tau, k},
    \end{split}
\end{equation}
where $\mathbf{D}^{l}{}_{\tau, k}=\mathbf{U}^{l}{}_{\tau, k}\mathbf{M}^{l}\mathbf{U}^{l}{}_{\tau, k}{}^{T}$.
To induce preconditioner in Eq.~(\ref{eq:thm1_eqn1}), we reformulate Eq.~(\ref{eq:thm1_eqn1}) as the general gradient descent form (i.e., matrix-vector product):
\begin{equation}
\label{eq:17}
    \begin{split}
        \mathbf{\tilde{G}}^{l}{}_{\tau, k}
    & = \text{blkdiag}(\underbrace{\mathbf{D}^{l}{}_{\tau, k}, \cdots, \mathbf{D}^{l}{}_{\tau, k}}_\text{$n$ times})\cdot(\mathbf{G}^{l}{}_{\tau, k})\\
    & = \mathbf{P}_{\text{GAP}}\cdot(\mathbf{G}^{l}{}_{\tau, k})
    \end{split}
\end{equation}
where $\mathbf{P}_{\text{GAP}}$ is a block diagonal matrix such that the main-diagonal blocks are $\mathbf{D}^{l}{}_{\tau, k}$'s. 
Now, we prove that block $\mathbf{D}^{l}{}_{\tau, k}$ is a positive definite matrix.
Since $\mathbf{D}^l_{\tau, k}$ is similar to $\mathbf{M}^l$ by Definition~\ref{def:similar_matrix}, they have the same eigenvalues. 
In addition, all eigenvalues of $\mathbf{D}^l_{\tau, k}$ are positive because all eigenvalues of $\mathbf{M}^l$ are positive.
Next, we show that $\mathbf{D}^l_{\tau, k}$ is a symmetric matrix as below. 
\begin{equation}
    \begin{split}
        (\mathbf{D}^{l}{}_{\tau, k}){}^{T} & = (\mathbf{U}^{l}{}_{\tau, k}\mathbf{M}^{l}\mathbf{U}^{l}{}_{\tau, k}{}^{T}){}^{T}\\
                                   & = \mathbf{U}^{l}{}_{\tau, k}\mathbf{M}^{l}\mathbf{U}^{l}{}_{\tau, k}{}^{T}\\
                                   & = \mathbf{D}^{l}{}_{\tau, k}
    \end{split}
\end{equation}
Therefore, $\mathbf{D}^l_{\tau, k}$ is a positive definite matrix. 
By Lemma~\ref{lem:block}, $\mathbf{P}_{\text{GAP}}$ is a positive definite matrix.

Because the unitary matrix $\mathbf{U}^l_{\tau, k}$ depends on the gradient matrix $\mathbf{\tilde{G}}^l_{\tau,k}$, it depends on the task-wise parameters $\theta_{\tau,k}$. 
Hence, $\mathbf{P}_{\text{GAP}}$ depends on the task-wise parameters $\theta_{\tau,k}$ because it depends on the unitary matrix $\mathbf{U}^l_{\tau, k}$.

Because $\mathbf{P}_{\text{GAP}}$ depends on the task-wise parameters $\theta_{\tau,k}$, it can be expressed as a function which is a smooth function mapping from the given $\theta_{\tau,k}$ to a positive definite matrix $\text{blkdiag}(\mathbf{D}^l_{\tau, k}, \cdots, \mathbf{D}^l_{\tau, k})$.
Hence, $\mathbf{P}_{\text{GAP}}$ is a Riemannian metric.

In summary, $\mathbf{P}_{\text{GAP}}$ is a Riemannian metric and depends on the task-specific parameters $\theta_{\tau,k}$.
\end{proof}

\subsection{Proof of Theorem~2}
\label{subsec:proof_of_thm2}
\begin{proof}
Let $\bm{g}_1, \bm{g}_2, \cdots, \bm{g}_m$ are the row vectors of $\mathbf{G}$. Then,
\begin{equation}
\begin{split}
\mathbf{G} 
=
\begin{bmatrix}
\lVert \bm{g}_1 \rVert    &        &             \\ 
                       & \ddots &             \\
                       &        & \lVert \bm{g}_m \rVert
\end{bmatrix}
\begin{bmatrix}
\frac{\bm{g}_1}{\lVert \bm{g}_1 \rVert} \\
\vdots \\
\frac{\bm{g}_m}{\lVert \bm{g}_m \rVert}
\end{bmatrix}.
\end{split}
\end{equation}
Under the Assumption~\ref{assumption}~(See Section~\ref{sec:3.3}), $\bm{g}_1, \bm{g}_2, \cdots, \bm{g}_m$ follow an i.i.d multivariate normal distribution.
Then, we have 
\begin{equation}
    \begin{split}
        \frac{\bm{g}_i}{\lVert \bm{g}_i \rVert} \ind \frac{\bm{g}_j}{\lVert \bm{g}_j \rVert}\ \ (\forall i \neq j),
    \end{split}
\end{equation}
and $\frac{\bm{g}_i}{\lVert \bm{g}_i \rVert}, \frac{\bm{g}_j}{\lVert \bm{g}_j \rVert}$ are located on the $(n-1)$-dimensional unit sphere~\cite{marsaglia1972choosing}.
Since independent vectors $\frac{\bm{g}_i}{\lVert \bm{g}_i \rVert}, \frac{\bm{g}_j}{\lVert \bm{g}_j \rVert}$ are located on the $(n-1)$-dimensional unit sphere, the vectors are asymptotically orthogonal as $n$ increases by Lemma~\ref{lemma1}.
Now, we rewrite $\mathbf{G}$ as follows.
\begin{equation}
\label{eqn:approx_final}
    \begin{split}
    \mathbf{G} 
    =
    \mathbf{I} 
        \begin{bmatrix}
            \lVert \bm{g}_1 \rVert    &        &             \\ 
                                   & \ddots &             \\
                                   &        &\lVert \bm{g}_m \rVert
        \end{bmatrix}
        \begin{bmatrix}
            \frac{\bm{g}_1}{\lVert \bm{g}_1 \rVert} \\
            \vdots \\
            \frac{\bm{g}_m}{\lVert \bm{g}_m \rVert}
        \end{bmatrix}
    \end{split}
\end{equation}
Because $\mathbf{I}$ is a unitary matrix and $(\frac{\bm{g}_1}{\lVert \bm{g}_1 \rVert}, \cdots, \frac{\bm{g}_m}{\lVert \bm{g}_m \rVert})^{T}$ approximately becomes semi-unitary matrices as $n$ increases, the singular values of $\mathbf{G}$ asymptotically become $\lVert \bm{g}_1 \rVert, \cdots, \lVert \bm{g}_m \rVert$.

Because of Eq.~(\ref{eqn:approx_final}), the following holds under the Assumption~\ref{assumption}~(See Section~\ref{sec:3.3}) as $n$ becomes sufficiently large.
\begin{equation}
\label{eq:33}
\begin{split}
\mathbf{\tilde{G}} \cong \mathbf{M}\mathbf{G}
\end{split}
\end{equation}
\end{proof}
\section{Experiments}
In this section, we show the superiority of GAP by comparing it with the state-of-the-art PGD-MAML family and the MAML family. 

\subsection{Implementation Details}
For the reproducibility, we provide the details of implementation. Our implementations are based on Torchmeta~\cite{deleu2019torchmeta} library. 

\subsubsection{Hyper-parameters for the few-shot learning}
\label{sec:5.1.1}
For the few-shot learning experiments, we use the hyper-parameters in Table~\ref{hyperparameters1}.
\begin{table}[h!]
  \caption{Hyper-parameters used for training GAP on various few-shot learning experiments.}
  \centering
  \resizebox{0.99\columnwidth}{!}{\begin{tabular}{lccccccccc}
    \toprule
    Hyper-parameter                 &\multicolumn{3}{c}{Sinusoid} &\multicolumn{2}{c}{mini-ImageNet} &\multicolumn{2}{c}{tiered-ImageNet} & \multicolumn{2}{c}{Cross-domain}  \\
    \midrule
                                        & 5 shot & 10 shot & 20 shot & 1 shot & 5 shot      & 1 shot & 5 shot    & 1 shot & 5 shot\\
    Bathc size                          & 4     & 4       & 4        & 4     & 2            & 4     & 2          & 4     & 2      \\
    Total training iteration            & 70000 & 70000   & 70000    & 80000 & 80000        & 130000& 200000     & 80000 & 80000  \\
    inner learning rate $\alpha$        & $10^{-2}$ & $10^{-2}$ & $10^{-2}$ & $10^{-2}$  & $10^{-2}$ & $10^{-2}$ & $10^{-2}$ & $10^{-2}$ & $10^{-2}$ \\
    outer learning rate $\beta_1$       & $10^{-3}$ & $10^{-3}$ & $10^{-3}$ & $10^{-4}$ & $10^{-4}$ & $10^{-4}$ & $10^{-4}$ & $10^{-4}$ & $10^{-4}$ \\
    outer learning rate $\beta_2$       & $10^{-3}$ & $10^{-3}$ & $10^{-4}$   & $3\times10^{-3}$ & $10^{-4}$ & $3\times10^{-3}$ & $10^{-4}$ & $3\times10^{-3}$ & $10^{-4}$ \\
    The number of training inner steps  & 5     & 5       & 5        & 5     & 5            & 5     & 5          & 5     & 5      \\
    The number of testing inner steps   & 10    & 10      & 10       & 10    & 10           & 10    & 10         & 10    & 10     \\
    Data augmentation                   & \multicolumn{3}{c}{None}   & \multicolumn{2}{c}{random flip} & \multicolumn{2}{c}{random flip} & \multicolumn{2}{c}{random flip} \\
    \bottomrule
  \end{tabular}}
  \label{hyperparameters1}
\end{table}

\subsubsection{Hyper-parameters for the few-shot domain generalization and reinforcement learning}
For the few-shot domain generalization and reinforcement learning experiments, we use the hyper-parameters in Table~\ref{hyperparameters2}. In the few-shot domain generalization experiments, we use the data augmentation used in Meta-Dataset~\cite{triantafillou2019meta}.
\begin{table}[h!]
  \caption{Hyper-parameters used for training GAP on the few-shot domain generalization and reinforcement learning experiments.}
  \centering
  \resizebox{0.99\columnwidth}{!}{\begin{tabular}{lccccccc}
    \toprule
    Hyper-parameter                 &\multicolumn{4}{c}{Meta-Dataset} & Half-cheetah Dir & Half-cheetah Vel & 2D Navigation  \\
    \midrule
                                        &\multicolumn{2}{c}{ImageNet only} &\multicolumn{2}{c}{All dataset} \\
                                        & +fo-MAML  & +Proto-MAML   & +fo-MAML  & +Proto-MAML      \\
    Bathc size                          & 16        & 16            & 16        & 16         & 40        & 40        & 40 \\
    Total training iteration            & 50000     & 50000         & 50000     & 50000      & 500       & 500       & 500\\   
    inner learning rate $\alpha$        & $10^{-1}$ & $10^{-1}$     & $10^{-1}$ & $10^{-1}$  & $10^{-1}$ & $10^{-1}$ & $10^{-1}$ \\
    outer learning rate $\beta_1$       & $10^{-5}$ & $10^{-4}$     & $10^{-5}$ & $10^{-4}$  & $10^{-3}$ & $10^{-3}$ & $10^{-4}$\\
    outer learning rate $\beta_2$       & $10^{-5}$ & $10^{-4}$     & $10^{-5}$ & $10^{-5}$  & $10^{-3}$ & $10^{-3}$ & $10^{-4}$\\
    The number of training inner steps  & 10        & 10            & 10        & 10         & 1, 2, 3   & 1, 2, 3   & 1, 2, 3\\
    The number of testing inner steps   & 40        & 40            & 40        & 40         & 1, 2, 3   & 1, 2, 3   & 1, 2, 3\\
    \bottomrule
  \end{tabular}}
  \label{hyperparameters2}
\end{table}

\subsubsection{Backbone Architecture}
\label{sec:5.1.2}
\textbf{2-layer MLP network:}
For the few-shot regression experiment, we use a simple Multi-Layer Perceptron (MLP) with two hidden layers of size $40$, with ReLU  nonlinearities as in~\cite{finn2017model}. For the reinforcement learning experiment, we also use a simple MLP with two hidden layers of size $100$ with ReLU as in~\cite{finn2017model}.
\newline
\textbf{4-Conv network:}
For the few-shot classification and cross-domain few-shot classification experiments, we use the standard Conv-4 backbone used in~\cite{vinyals2016matching}, comprising 4 modules with $3 \times 3$ convolutions, with 128 filters followed by batch normalization~\cite{ioffe2015batch}, ReLU non-linearity, and $2 \times 2$ max-pooling.
\newline
\textbf{ResNet-18:}
For the few-shot domain generalization experiments, we employ ResNet-18 as the general feature extractor, following the methodology of previous few-shot domain generalization studies~\cite{triantafillou2019meta, baik2023learning}. 
In GAP+fo-MAML, the weights and biases of the linear layer are initialized to zero and are not meta-trained, consistent with~\cite{triantafillou2019meta}. This means that the linear layer is adapted from zero initialization during the inner-loop optimization. 
In GAP+Proto-MAML, the linear layer is initialized as described in~\cite{triantafillou2019meta}.

\subsubsection{Optimization}
We use ADAM optimizer~\cite{kingma2014adam}. 
For tiered-ImageNet experiment, the learning rate (LR) is scheduled by the cosine learning rate decay~\cite{loshchilov2016sgdr} for every 500 iterations. 
For Meta-Dataset, we decay the learning rate of each parameter by a factor of $0.8$ every $10,000$ iterations. In all the experiments except for tiered-ImageNet and Meta-Dataset, the learning rate is unscheduled.

\subsubsection{Pre-training}
For the few-shot domain generalization experiments, we initialize the feature extractor using the weights of the k-NN Baseline model trained on ImageNet, as described in~\cite{triantafillou2019meta}.

\subsubsection{Preconditioning}
In the few-shot regression and the reinforcement learning experiment, we apply preconditioner only to the hidden layer. In both few-shot classification and the few-shot domain generalization experiments, we only apply preconditioner to convolutional layers.

\subsection{Few-shot regression}
\subsubsection{Datasets and Experimental setup}
The goal of few-shot regression is to fit an unknown target function for the given $K$ sample points from the function.
For the evaluation of few-shot regression, we use the sinusoid regression benchmark~\cite{finn2017model}.
In this benchmark, sinusoid is used as the target function. Each task has a sinusoid $y(x)=A\sin(\omega x + b)$ as the target function, where the parameter values are within the following range: amplitude $A\in\left[0.1, 5.0\right]$, frequency $\omega\in\left[0.8, 1.2\right]$, and phase $b\in\left[0, \pi\right]$. For each task, input data point $x$ is sampled from $\left[-5.0, 5.0\right]$. 
In the experiment, we use a simple Multi-Layer Perceptron (MLP), following the setting in~\cite{finn2017model}. 
The details of the architecture are provided in Section~\ref{sec:5.1.2}.

\subsubsection{Results}
We evaluate GAP and compare it with MAML family and PGD-MAML family on a regression task. As shown in Table~\ref{tab:regression}, GAP consistently achieves the lowest mean squared error (MSE) scores, with the lowest confidence intervals in all three cases. The performance of GAP is improved by 89\% on 10-shot and 94\% on 20-shot compared to the performance of state-of-the-art algorithms.

\begin{table}[t!]
    \caption{Few-shot regression for the sinusoid regression benchmark with a \textit{2-layer MLP} backbone. We report MSE $\pm$ 95\% confidence intervals(ci) for 600 tasks following the setup in \cite{finn2017model}. $^{\dagger}$ denotes PGD-MAML family.}
  \centering
  \resizebox{0.5\textwidth}{!}{\begin{tabular}{lcccc}
    \toprule
    Algorithm                           &\quad\; 5-shot         &\quad\; 10-shot             &\quad\; 20-shot\\
    \midrule
    MAML~\cite{finn2017model}          &\quad\; $1.13\pm0.18$  &\quad\; $0.77\pm0.11$       &\quad\; $0.48\pm0.08$\\
    Meta-SGD$^{\dagger}$~\cite{li2017meta}         &\quad\; $0.90\pm0.16$  &\quad\; $0.53\pm0.09$       &\quad\; $0.31\pm0.05$\\
    MT-Net~\cite{lee2018gradient}      &\quad\; $0.76\pm0.09$  &\quad\; $0.49\pm0.05$       &\quad\; $0.33\pm0.04$\\
    ALFA~\cite{baik2020meta}           &\quad\; $0.92\pm0.19$  &\quad\; $0.62\pm0.16$       &\quad\; $0.34\pm0.07$\\
    L2F~\cite{baik2020learning}        &\quad\; $0.71\phantom{.} \pm $ N/A   &\quad\; $0.37\phantom{.} \pm $ N/A  &\quad\; $0.16\phantom{.} \pm $ N/A\\
    PAMELA$^{\dagger}$~\cite{rajasegaran2020meta}  &\quad\; $0.54\pm0.06$  &\quad\; $0.41\pm0.04$       &\quad\; $0.17\pm0.03$\\
    MeTAL~\cite{baik2021meta}          &\quad\; $0.74\pm0.18$  &\quad\; $0.44\pm0.11$       &\quad\; $0.21\pm0.06$\\
    \midrule
    GAP$^{\dagger}$                                &\quad\; $\mathbf{0.16\pm0.04}$  &\quad\; $\mathbf{0.04\pm0.01}$       &\quad\; $\mathbf{0.01\pm0.01}$\\
    \bottomrule
  \end{tabular}}
  \label{tab:regression}
\end{table}

\subsection{Few-shot classification}
\subsubsection{Datasets and Experimental setup}
For the few-shot classification, we evaluate two benchmarks:
(1) mini-ImageNet~\cite{vinyals2016matching}; this dataset has 100 classes and it is a subset of ImageNet~\cite{russakovsky2015imagenet}, and we use the same split as in~\cite{ravi2016optimization}, with 64, 16 and 20 classes for train, validation and test, respectively. (2) tiered-ImageNet~\cite{ren2018meta}; this is also a subset of ImageNet with 608 classes grouped into 34 high-level categories, and divided into 20, 6 and 8 for train, validation, and test, respectively.
For all the experiments, our model follows the standard \textit{Conv-4} backbone used in~\cite{vinyals2016matching}.
The details of the architecture are provided in Section~\ref{sec:5.1.2}.
Following the experimental protocol in~\cite{finn2017model}, we use 15 samples per class in the query-set to compute the meta gradients. In meta training and meta testing, the inner-loop optimization is updated in five and ten steps, respectively.

\subsubsection{Results}
Table~\ref{tab:miniImageNet}~\&~\ref{tab:tiered-ImageNet} present the performance of GAP, the state-of-the-art PGD-MAML family, and the state-of-the-art MAML-family on mini-ImageNet and tiered-ImageNet under two typical settings: 5-way 1-shot and 5-way 5-shot. 
The GAP outperforms all of the previous PGD-MAML family and MAML family. 
Compared to the state-of-the-art MAML family, GAP improves the performance with a quite significant margin for both mini-ImageNet and tiered-ImageNet datasets.
Compared to the state-of-the-art PGD-MAML family, GAP shows that the 1- and 5-shot accuracy can be increased by 1.4~\% and 1.5~\% on mini-ImageNet dataset, and by 0.7~\% and 0.68~\% on tiered-ImageNet dataset, respectively. 
We also evaluated Approximate GAP that is introduced in Section~\ref{sec:3.3}. 
The results show that the approximated version can perform comparably to the original GAP. 
Although Approximate GAP shows slightly lower accuracies than the original,  its performance is superior to most of the existing algorithms because of its Riemannian metric property.

\begin{table}[t!]
    \caption{5-way few-shot classification accuracy (\%) on mini-ImageNet with a \textit{Conv-4} backbone. We report mean $\pm$ 95\% confidence intervals(ci) for 600 tasks according to \cite{finn2017model}. $^{\dagger}$ denotes PGD-MAML family.}
  \centering
  \resizebox{0.5\textwidth}{!}{\begin{tabular}{lccc}
    \toprule
    Algorithm                                          & \quad \; 5-way 1-shot                         & \quad \; 5-way 5-shot\\
    \midrule
    MAML~\cite{finn2017model}                           & \quad \; $47.89 \pm 1.20$                     & \quad \; $64.59 \pm 0.88$\\
    Meta-SGD$^{\dagger}$~\cite{li2017meta}                         & \quad \; $50.47 \pm 1.87$                     & \quad \; $64.00 \pm 0.90$\\
    BMAML~\cite{yoon2018bayesian}              & \quad \; $53.80 \pm 1.46$                     & \quad \; $64.23 \pm 0.69$\\
    ANIL~\cite{raghu2019rapid}                         & \quad \; $46.70 \pm 0.40$                     & \quad \; $61.50 \pm 0.50$\\
    LLAMA~\cite{grant2018recasting}.                   & \quad \; $49.40 \pm 1.83$                     & \quad \quad N/A\\
    PLATIPUS~\cite{finn2018probabilistic}              & \quad \; $50.13 \pm 1.86$                     & \quad \quad -\\
    T-net~\cite{lee2018gradient}                       & \quad \; $50.86 \pm 1.82$                     & \quad \quad N/A\\
    MT-net~\cite{lee2018gradient}                      & \quad \; $51.70 \pm 1.84$                    & \quad \quad N/A\\
    MAML++~\cite{antoniou2018train}                    & \quad \; $52.15 \pm 0.26$                     & \quad \; $68.32 \pm 0.44$\\ 
    iMAML-HF~\cite{rajeswaran2019meta}                 & \quad \; $49.30 \pm 1.88$                     & \quad \quad N/A\\
    WarpGrad~\cite{flennerhag2019meta}                 & \quad \; $52.30 \pm 0.90$                     & \quad \; $68.40 \pm 0.60$\\
    MC1$^{\dagger}$~\cite{park2019meta}                            & \quad \; $53.74 \pm 0.84$                     & \quad \; $68.01 \pm 0.73$\\
    MC2$^{\dagger}$~\cite{park2019meta}                            & \quad \; $54.08 \pm 0.88$                     & \quad \; $67.99 \pm 0.73$\\
    MH-C$^{\dagger}$~\cite{zhao2020meta}                           & \quad \; $48.64 \pm 0.33$                     & \quad \; $64.52 \pm 0.51$\\
    MH$^{\dagger}$~\cite{zhao2020meta}                             & \quad \; $49.41 \pm 0.96$                     & \quad \; $67.16 \pm 0.42$\\
    BOIL~\cite{oh2020boil}                             & \quad \; $49.61 \pm 0.16$                     & \quad \; $66.46 \pm 0.37$\\

    ARML~\cite{yao2020automated}                       & \quad \; $50.42 \pm 1.79$                     & \quad \; $64.12 \pm 0.90$\\
    ALFA~\cite{baik2020meta}                           & \quad \; $50.58 \pm 0.51$                     & \quad \; $69.12 \pm 0.47$\\

    L2F~\cite{baik2020learning}                        & \quad \; $52.10 \pm 0.50$                     & \quad \; $69.38 \pm 0.46$\\
    ModGrad$^{\dagger}$~\cite{simon2020modulating}                 & \quad \; $53.20 \pm 0.86$                     & \quad \; $69.17 \pm 0.69$\\
    PAMELA$^{\dagger}$~\cite{rajasegaran2020meta}                  & \quad \; $53.50 \pm 0.89$                     & \quad \; $70.51 \pm 0.67$\\
    SignMAML~\cite{fan2021sign}                        & \quad \; $42.90 \pm 1.50$                     & \quad \; $60.70 \pm 0.70$\\
    CTML~\cite{peng2021clustered}                      & \quad \; $50.47 \pm 1.83$                     & \quad \; $64.15 \pm 0.90$\\
    MeTAL~\cite{baik2021meta}                        & \quad \; $52.63 \pm 0.37$                     & \quad \; $70.52 \pm 0.29$\\
    ECML~\cite{hiller2022enforcing}                    & \quad \; $48.94 \pm 0.80$                     & \quad \; $65.26 \pm 0.67$\\
    Sharp-MAML\_{up}~\cite{abbas2022sharp}              & \quad \; $49.56\phantom{.} \pm $ N/A          & \quad \; $63.06\phantom{.} \pm $ N/A\\
    Sharp-MAML\_{low}~\cite{abbas2022sharp}             & \quad \; $49.72\phantom{.} \pm $ N/A          & \quad \; $63.18\phantom{.} \pm $ N/A\\
    Sharp-MAML\_{both}~\cite{abbas2022sharp}            & \quad \; $50.28\phantom{.} \pm $ N/A          & \quad \; $65.04\phantom{.} \pm $ N/A\\
    FBM~\cite{yang2022calibrating}                     & \quad \; $50.62 \pm 1.79$                     & \quad \; $64.78 \pm 0.35$\\
    CxGrad~\cite{lee2022contextual}                    & \quad \; $51.80 \pm 0.46$                     & \quad \; $69.82 \pm 0.42$\\
    HyperMAML~\cite{przewikezlikowski2022hypermaml}     & \quad \; $51.84 \pm 0.57$                     & \quad \; $66.29 \pm 0.43$\\

    EEML~\cite{li2022eeml}                             & \quad \; $52.42 \pm 1.75$                     & \quad \; $68.40 \pm 0.95$\\

    MH-O$^{\dagger}$~\cite{zhao2020meta}                           & \quad \; $52.50 \pm 0.61$                     & \quad \; $67.76 \pm 0.34$\\
    Sparse-MAML$^{\dagger}$~\cite{von2021learning}                 & \quad \; $50.35 \pm 0.39$                     & \quad \; $67.03 \pm 0.74$\\
    Sparse-ReLU-MAML$^{\dagger}$~\cite{von2021learning}            & \quad \; $50.39 \pm 0.89$                     & \quad \; $64.84 \pm 0.46$\\
    Sparse-MAML+$^{\dagger}$~\cite{von2021learning}           & \quad \; $51.04 \pm 0.59$                     & \quad \; $67.03 \pm 0.74$\\
    \midrule
    Approximate GAP$^{\dagger}$                                    & \quad \; $53.52 \pm 0.88$                     & \quad \; $70.75 \pm 0.67$\\
    GAP$^{\dagger}$                                              & \quad \; $\mathbf{54.86} \pm \mathbf{0.85}$   & \quad \; $\mathbf{71.55}\pm\mathbf{0.61}$\\
    \bottomrule
  \end{tabular}}
  \label{tab:miniImageNet}
\end{table}

\begin{table}[t!]
    \caption{5-way few-shot classification accuracy (\%) on tiered-ImageNet dataset with a \textit{Conv-4} backbone. We report mean $\pm$ 95\% confidence intervals(ci) for 600 tasks according to \cite{finn2017model}. $^{\dagger}$ denotes PGD-MAML family.}
  \centering
  \resizebox{0.5\textwidth}{!}{\begin{tabular}{lccc}
    \toprule
    Algorithm                                       & \quad \; 5-way 1-shot                           & \quad \; 5-way 5-shot\\
    \midrule
    Meta-SGD$^{\dagger}$~\cite{li2017meta}                  & \quad \; $50.92 \pm 0.93$                     & \quad \; $69.28 \pm 0.80$\\
    MAML~\cite{finn2017model}                   & \quad \; $51.70 \pm 1.80$                     & \quad \; $70.30 \pm 1.80$\\
    MT-net~\cite{lee2018gradient}               & \quad \; $51.95 \pm 1.83$                     & \quad \quad N/A\\
    WarpGrad~\cite{flennerhag2019meta}           & \quad \; $57.20 \pm 0.90$                     & \quad \; $74.10 \pm 0.70$\\
    BOIL~\cite{oh2020boil}.                    & \quad \; $48.58 \pm 0.27$                     & \quad \; $69.37 \pm 0.12$\\
    ALFA~\cite{baik2020meta}                    & \quad \; $53.16 \pm 0.49$                     & \quad \; $70.54 \pm 0.46$\\
    L2F~\cite{baik2020learning}                 & \quad \; $54.40 \pm 0.50$                     & \quad \; $73.34 \pm 0.44$\\
    ARML~\cite{yao2020automated}                & \quad \; $52.91 \pm 1.83$                     & \quad \quad N/A\\
    PAMELA$^{\dagger}$~\cite{rajasegaran2020meta}           & \quad \; $54.81 \pm 0.88$                     & \quad \; $74.39 \pm 0.71$\\
    Sparse-ReLU-MAML$^{\dagger}$~\cite{von2021learning}      & \quad \; $53.18 \pm 0.52$                     & \quad \; $69.06 \pm 0.28$\\
    Sparse-MAML$^{\dagger}$~\cite{von2021learning}          & \quad \; $53.47 \pm 0.53$                     & \quad \; $68.83 \pm 0.65$\\
    Sparse-MAML+$^{\dagger}$~\cite{von2021learning}     & \quad \; $53.91 \pm 0.67$                     & \quad \; $69.92 \pm 0.21$\\
    MeTAL~\cite{baik2021meta}                   & \quad \; $54.34 \pm 0.31$                     & \quad \; $70.40 \pm 0.21$\\
    
    CxGrad~\cite{lee2022contextual}             & \quad \; $55.55 \pm 0.46$                     & \quad \; $73.55 \pm 0.41$\\
    ECML~\cite{hiller2022enforcing}             & \quad \; $47.34 \pm 0.88$                     & \quad \; $64.77 \pm 0.75$\\
    \midrule
   Approximate GAP$^{\dagger}$                                & \quad \; $56.86 \pm 0.91$                     & \quad \; $74.41 \pm 0.72$\\
   GAP$^{\dagger}$                                      & \quad \; $\mathbf{57.60} \pm \mathbf{0.93}$   & \quad \; $\mathbf{74.90} \pm \mathbf{0.68}$\\
    \bottomrule
  \end{tabular}}
  \label{tab:tiered-ImageNet}
\end{table}

\subsection{Cross-domain few-shot classification}
The cross-domain few-hot classification introduced by~\cite{chen2019closer} addresses a more challenging and practical few-shot classification scenario in which meta-train and meta-test tasks are sampled from different task distributions. These scenarios are designed to evaluate meta-level overfitting of meta-learning algorithms by creating a large domain gap between meta-trains and meta-tests. In particular, an algorithm can be said to be meta-overfitting if it relies too much on the prior knowledge of previously seen meta-train tasks instead of focusing on a few given examples to learn a new task. This meta-level overfitting makes the learning system more likely to fail to adapt to new tasks sampled from substantially different task distributions.

\subsubsection{Datasets and Experimental setup}
To evaluate the level of meta-overfitting for GAP, we evaluate a cross-domain few-shot classification experiment.
The mini-ImageNet is used for the meta-train task, and the tiered-ImageNet~\cite{ren2018meta}, CUB-200-2011~\cite{wah2011caltech}, Cars~\cite{bertinetto2018meta} datasets are used for the meta-test task. The CUB dataset has 200 fine-grained classes and consists of a total of 11,788 images; it is further divided into 100 meta-train classes, 50 meta-validation classes, and 50 meta-test classes. 
The Cars~\cite{krause20133d} dataset consists of 16,185 images of 196 classes of cars; it is split into 8,144 training images and 8,041 testing images, where each class has been split roughly in 50-50. The classes are typically at the level of Make, Model, Year, e.g., 2012 Tesla Model S or 2012 BMW M3 coupe.
As with the few-shot classification experiment, we use the standard \textit{Conv-4} backbone and follow the same experimental protocol.

\subsubsection{Results.}
Table~\ref{tab:cross_domain} presents the cross-domain few-shot performance for GAP, MAML family, and PGD-MAML family. GAP significantly outperforms the state-of-the-art algorithms on 5-way 1-shot and 5-way 5-shot cross-domain classification tasks. In particular, for the tiered-ImageNet dataset, the performance was improved by 8.6\% and 4.1\% on 1-shot and 5-shot classification tasks, respectively. Because GAP can simultaneously consider a task's individuality and optimization trajectory in the inner-loop optimization, it can overcome meta-overfitting better than the existing methods. 
However, Approximate GAP shows more performance degradation in cross-domain few-shot classification than in few-shot classification. 
In particular, when the domain difference with the meta-train is more significant (i.e., the tiered-ImageNet dataset) than when the domain difference with the meta-train is marginal (i.e., CARS and CUB datasets), it shows a more considerable performance drop. 
We can see that full adaptation plays an important role in cross-domain few-shot classification. 

\begin{table*}[t!]
    \caption{5-way few-shot cross domain classification accuracy (\%) with a \textit{Conv-4} backbone, meta training on mini-ImageNet dataset, and meta-testing on tiered-ImageNet, CUB, or Cars datasets. We report mean $\pm$ 95\% confidence intervals(ci) for 600 tasks according to \cite{finn2017model}. $^{\dagger}$ denotes PGD-MAML family.
  }
  \centering
  \resizebox{1.\textwidth}{!}{
  \begin{tabular}{lcccccc}
    \toprule                                     
                       & \multicolumn{2}{c}{tiered-ImageNet} & \multicolumn{2}{c}{CUB} & \multicolumn{2}{c}{Cars} \\
                       \cmidrule(l){2-3} \cmidrule(l){4-5} \cmidrule(l){6-7}
    Algorithm             & 1-shot  & 5-shot & 1-shot  & 5-shot & 1-shot  & 5-shot \\
    \midrule
    MAML~\cite{finn2017model}               & $51.61\pm0.20$ & $65.76\pm0.27$   & $40.51\pm0.08$ & $53.09\pm0.16$ & $33.57\pm0.14$ & $44.56\pm0.21$ \\
    ANIL~\cite{raghu2019rapid}               & $52.82\pm0.29$ & $66.52\pm0.28$   & $41.12\pm0.15$ & $55.82\pm0.21$ & $34.77\pm0.31$ & $46.55\pm0.29$ \\
    BOIL~\cite{oh2020boil}               & $53.23\pm0.41$ & $69.37\pm0.23$   & $44.20\pm0.15$ & $60.92\pm0.11$ & $36.12\pm0.29$ & $50.64\pm0.22$\\
    BMAML~\cite{yoon2018bayesian}              & N/A              & N/A            & $33.52\pm0.36$ & $51.35\pm0.16$ & N/A            & N/A \\
    ALFA~\cite{baik2020meta}               & N/A              & N/A            & N/A             & $58.35\pm0.25$ & N/A           & N/A \\
    L2F~\cite{baik2020learning}                & N/A              & N/A            & N/A             & $60.89\pm0.22$ & N/A           & N/A \\
    MeTAL~\cite{baik2021meta}              & N/A              & N/A            & N/A             & $58.20\pm0.24$ & N/A           & N/A \\
    HyperMAML~\cite{przewikezlikowski2022hypermaml}          & N/A              & N/A            & $36.52\pm0.61$ & $49.43\pm0.14$ & N/A            & N/A \\
    CxGrad~\cite{lee2022contextual}             & N/A              & N/A            & N/A             & $63.92\pm0.44$ & N/A           & N/A \\
    Sparse-MAML$^{\dagger}$~\cite{von2021learning}         & $53.47\pm0.53$ & $68.83\pm0.65$   & $41.37\pm0.73$ & $60.58\pm1.10$ & $35.90\pm0.50$ & $52.63\pm0.56$ \\
    Sparse-ReLU-MAML$^{\dagger}$~\cite{von2021learning}    & $53.77\pm0.94$ & $68.12\pm0.69$   & $42.89\pm0.45$ & $57.53\pm0.94$ & $36.04\pm0.55$ & $49.95\pm0.42$ \\
    Sparse-MAML+$^{\dagger}$~\cite{von2021learning}        & $53.91\pm0.67$ & $69.92\pm0.21$   & $43.43\pm1.04$ & $62.02\pm0.78$ & $37.14\pm0.77$ & $53.18\pm0.44$ \\
    \midrule
    Approximate GAP$^{\dagger}$   & $57.47 \pm 0.99$ & $71.66 \pm 0.76$ & $43.77 \pm 0.79$ & $62.92 \pm 0.73$ & $37.00 \pm 0.75$ & $53.28 \pm 0.76$\\
    GAP$^{\dagger}$             & $\mathbf{58.56\pm0.93}$ & $\mathbf{72.82\pm0.77}$   & $\mathbf{44.74\pm0.75}$ & $\mathbf{64.88\pm0.72}$ & $\mathbf{38.44\pm0.77}$ & $\mathbf{55.04\pm0.77}$ \\
    \bottomrule
  \end{tabular}
  }
  \label{tab:cross_domain}
\end{table*}

\subsection{Few-shot domain generalization}
\subsubsection{Datasets and Experimental setup}
We use the Meta-Dataset~\cite{triantafillou2019meta} which is the standard benchmark for the few-shot domain generalization. It is a large-scale benchmark that has been widely used in recent years for few-shot domain generalization through multiple domains. It contains a total of ten diverse datasets: ImageNet~\cite{russakovsky2015imagenet}, Omniglot~\cite{lake2015human}, FGVC-Aircraft~(Aircraft)~\cite{maji2013fine}, CUB-200-2011~(Birds)~\cite{wah2011caltech}, Describable Textures~(DTD)~\cite{cimpoi2014describing}, QuickDraw~\cite{jongejan2016quick}, FGVCx Fungi~(Fungi)~\cite{schroeder2018fgvcx}, VGG Flower~(Flower)~\cite{nilsback2008automated}, Traffic Signs~\cite{houben2013detection} and MSCOCO~\cite{lin2014microsoft}.
Following the previous works~\cite{requeima2019fast, bateni2020improved, li2021universal, triantafillou2021learning, li2022cross}, we also add three additional datasets including MNIST~\cite{lecun1998gradient}, CIFAR10~\cite{krizhevsky2009learning} and CIFAR100~\cite{krizhevsky2009learning}.
We follow the standard training procedure in~\cite{triantafillou2019meta} and consider both the `Training on all datasets'~(multi-domain learning) and `Training on ImageNet only'~(single-domain learning) settings. 
In ‘Training on all datasets’ setting, we follow the standard procedure and use the first eight datasets for meta-training, in which each dataset is further divided into train, validation and test set with disjoint classes.
While the evaluation within these datasets is used to measure the generalization ability in the seen domains, the remaining five datasets are reserved as unseen domains in meta-test for measuring the cross-domain generalization ability. In `Training on ImageNet only' setting, we follow the standard procedure and only use train split of ImageNet for meta-training. The evaluation of models is performed using the test split of ImageNet and the other 12 datasets. For all the experiments, we adopt \textit{ResNet-18}~\cite{he2016deep} as the general feature extractor following the previous few-shot domain generalization works~\cite{requeima2019fast, bateni2020improved, li2021universal, triantafillou2021learning, li2022cross}. We apply GAP to fo-MAML~\cite{finn2017model} and Proto-MAML~\cite{triantafillou2019meta}. Following the experimental protocol in~\cite{triantafillou2019meta}, we use 600 randomly sampled tasks for each dataset with varying number of ways and shots. In meta training and meta-testing, the inner-loop optimization is updated in ten and forty steps, respectively.

\subsubsection{Results}
For Approximate GAP, GAP, and MAML family, Table~\ref{tab:img_metadataset}~\&~\ref{tab:all_metadataset} present the performance of models trained on ImageNet only and trained on all dataset, respectively. 
The results demonstrate that GAP can consistently outperform fo-MAML~(first-order MAML) and Proto-MAML, which is a MAML variant proposed by~\cite{triantafillou2019meta} that substantially improves the MAML initialization at fc-layer using class prototypes. 
For models trained on ImageNet, GAP demonstrates that the performance of fo-MAML and Proto-MAML can be improved by $10.82\%$ and $7.13\%$ on the seen domains, and by $11.48\%$ and $7.85\%$ on the unseen domains without the three datasets~(MNIST, CIFAR-10, and CIFAR-100).
For models trained on all datasets, the performance of fo-MAML and Proto-MAML improved by $7.58\%$ and $2.87\%$ on the seen domains, and by $19.65\%$ and $10.31\%$ on the unseen domains when excluding the three datasets. 
Additionally, GAP+fo-MAML and GAP+Proto-MAML significantly outperform ALFA+fo-MAML and ALFA+Proto-MAML, which are known as part of the state-of-the-art MAML family, on both seen and unseen domains. 
An interesting aspect of these results is that Approximate GAP and GAP exhibit similar performance despite the domain differences, unlike in cross-domain few-shot classification. 
This similarity in performance can be attributed to Approximate GAP approximating GAP as the architecture size increases.

\begin{table*}[t!]
    \caption{
    Few-shot domain generalization for the Meta-Dataset benchmark with a ResNet-18 backbone trained on \textit{ImageNet only}. 
    The ImageNet is seen during meta-training and the other twelve datasets are unseen and used for test only. 
    We report the three average accuracy as follows: seen domains, unseen domains, all domains. 
    To ensure a fair comparison, we report the average accuracy for unseen domains and all domains under two settings: one including MNIST, CIFAR-10, and CIFAR-100 datasets and the other without them.}
  \centering
  \resizebox{1.\textwidth}{!}{\begin{tabular}{lccccccccccccc}
    \toprule
    Datasets &\quad\; fo-MAML &\quad\; Proto-MAML &\quad\; \begin{tabular}{@{}c@{}}ALFA\\+fo-MAML\end{tabular} &\quad\; \begin{tabular}{@{}c@{}}ALFA\\+Proto-MAML\end{tabular} &\quad\; \begin{tabular}{@{}c@{}}Approximate GAP\\+fo-MAML\end{tabular} &\quad\; \begin{tabular}{@{}c@{}}Approximate GAP\\+Proto-MAML\end{tabular} &\quad\; \begin{tabular}{@{}c@{}}GAP\\+fo-MAML\end{tabular} &\quad\; \begin{tabular}{@{}c@{}}GAP\\+Proto-MAML\end{tabular}\\
    \midrule
    ImageNet &\quad\; $45.51$ &\quad\; $49.53$ &\quad\; $51.09$ &\quad\; $52.80$ &\quad\; $55.34$ &\quad\; $56.65$ &\quad\; $56.33$ &\quad\; $\mathbf{56.66}$\\
    \midrule
    Omniglot &\quad\; $55.55$ &\quad\; $63.37$ &\quad\; $67.89$ &\quad\; $61.87$ &\quad\; $75.17$ &\quad\; $76.30$ &\quad\; $77.04$ &\quad\; $\mathbf{77.57}$\\
    Aircraft &\quad\; $56.24$ &\quad\; $55.95$ &\quad\; $66.34$ &\quad\; $63.43$ &\quad\; $67.40$ &\quad\; $67.61$ &\quad\; $68.03$ &\quad\; $\mathbf{68.50}$\\
    Birds &\quad\; $63.61$ &\quad\; $68.66$ &\quad\; $67.67$ &\quad\; $69.75$ &\quad\; $72.35$ &\quad\; $73.29$ &\quad\; $72.01$ &\quad\; $\mathbf{73.51}$\\
    Textures &\quad\; $68.04$ &\quad\; $66.49$ &\quad\; $65.34$ &\quad\; $70.78$ &\quad\; $71.12$ &\quad\; $71.01$ &\quad\; $\mathbf{71.82}$ &\quad\; $71.42$\\
    Quick Draw &\quad\; $43.96$ &\quad\; $51.52$ &\quad\; $60.53$ &\quad\; $59.17$ &\quad\; $63.49$ &\quad\; $63.71$ &\quad\; $64.36$ &\quad\; $\mathbf{65.36}$\\
    Fungi &\quad\; $32.10$ &\quad\; $39.96$ &\quad\; $37.41$ &\quad\; $41.49$ &\quad\; $37.19$ &\quad\; $38.30$ &\quad\; $37.46$ &\quad\; $\mathbf{38.57}$\\
    VGG Flower &\quad\; $81.74$ &\quad\; $87.15$ &\quad\; $84.28$ &\quad\; $85.96$ &\quad\; $84.44$ &\quad\; $86.73$ &\quad\; $85.31$ &\quad\; $\mathbf{86.79}$\\
    Traffic Sign &\quad\; $50.93$ &\quad\; $48.83$ &\quad\; $60.86$ &\quad\; $60.78$ &\quad\; $66.82$ &\quad\; $66.44$ &\quad\; $67.89$ &\quad\; $\mathbf{66.90}$\\
    MSCOCO &\quad\; $35.30$ &\quad\; $43.74$ &\quad\; $40.05$ &\quad\; $48.11$ &\quad\; $45.99$ &\quad\; $46.54$ &\quad\; $\mathbf{46.87}$ &\quad\; $46.76$\\
    MNIST$^{\star}$ &\quad\; N/A &\quad\; N/A &\quad\; N/A &\quad\; N/A &\quad\; $93.46$ &\quad\; $92.27$ &\quad\; $93.53$ &\quad\; $\mathbf{93.97}$\\
    CIFAR-10$^{\star}$ &\quad\; N/A &\quad\; N/A &\quad\; N/A &\quad\; N/A &\quad\; $75.08$ &\quad\; $74.05$ &\quad\; $\mathbf{75.96}$ &\quad\; $74.54$\\
    CIFAR-100$^{\star}$ &\quad\; N/A &\quad\; N/A &\quad\; N/A &\quad\; N/A &\quad\; $65.65$ &\quad\; $62.66$ &\quad\; $\mathbf{65.94}$ &\quad\; $63.23$\\
    \midrule
    Average seen &\quad\; $45.51$ &\quad\; $49.53$ &\quad\; $51.09$ &\quad\; $52.80$ &\quad\; $55.34$ &\quad\; $56.65$ &\quad\; $56.33$ &\quad\; $\mathbf{56.66}$\\
    \midrule
    Average unseen w/o $\star$ &\quad\; $54.16$ &\quad\; $58.41$ &\quad\; $61.15$ &\quad\; $62.37$ &\quad\; $64.89$ &\quad\; $65.65$ &\quad\; $65.64$ &\quad\; $\mathbf{66.26}$\\
    Average all w/o $\star$ &\quad\; $53.30$ &\quad\; $57.52$ &\quad\; $60.15$ &\quad\; $61.41$ &\quad\; $63.93$ &\quad\; $64.66$ &\quad\; $64.71$ &\quad\; $\mathbf{65.30}$\\
    \midrule
    Average unseen w/ $\star$ &\quad\; N/A &\quad\; N/A &\quad\; N/A &\quad\; N/A &\quad\; $69.56$ &\quad\; $68.39$ &\quad\; $\mathbf{70.04}$ &\quad\; $69.28$\\
    Average all w/ $\star$ &\quad\; N/A &\quad\; N/A &\quad\; N/A &\quad\; N/A &\quad\; $67.25$ &\quad\; $67.35$ &\quad\; $67.89$ &\quad\; $\mathbf{68.06}$\\
    \bottomrule
  \end{tabular}
  }
  \label{tab:img_metadataset}
\end{table*}

\begin{table*}[t!]
    \caption{
    Few-shot domain generalization for the Meta-Dataset benchmark with a ResNet-18 backbone trained on \textit{eight datasets}. 
    The first eight datasets are seen during meta-training and the other five datasets are unseen and used for test only. We report the three average accuracy as follows: seen domains, unseen domains, all domains. 
    To ensure a fair comparison, we report the average accuracy for unseen domains and all domains under two settings: one including MNIST, CIFAR-10, and CIFAR-100 datasets and the other without them.}
  \centering
  \resizebox{1.\textwidth}{!}{\begin{tabular}{lccccccccccccc}
    \toprule
    Datasets &\quad\; fo-MAML &\quad\; Proto-MAML &\quad\; \begin{tabular}{@{}c@{}}ALFA\\+fo-MAML\end{tabular} &\quad\; \begin{tabular}{@{}c@{}}ALFA\\+Proto-MAML\end{tabular} &\quad\; \begin{tabular}{@{}c@{}}Approximate GAP\\+fo-MAML\end{tabular} &\quad\; \begin{tabular}{@{}c@{}}Approximate GAP\\+Proto-MAML\end{tabular} &\quad\; \begin{tabular}{@{}c@{}}GAP\\+fo-MAML\end{tabular} &\quad\; \begin{tabular}{@{}c@{}}GAP\\+Proto-MAML\end{tabular}\\
    \midrule
    ImageNet &\quad\; $37.83$ &\quad\; $46.52$ &\quad\; $48.88$ &\quad\; $49.87$ &\quad\; $51.57$ &\quad\; $53.04$ &\quad\; $51.72$ &\quad\; $\mathbf{53.54}$\\
    Omniglot &\quad\; $83.92$ &\quad\; $82.69$ &\quad\; $76.67$ &\quad\; $78.40$ &\quad\; $88.13$ &\quad\; $85.82$ &\quad\; $\mathbf{88.29}$ &\quad\; $86.54$\\
    Aircraft &\quad\; $76.41$ &\quad\; $75.23$ &\quad\; $68.82$ &\quad\; $71.88$ &\quad\; $81.05$ &\quad\; $79.30$ &\quad\; $\mathbf{81.52}$ &\quad\; $78.59$\\
    Birds &\quad\; $62.43$ &\quad\; $69.88$ &\quad\; $66.32$ &\quad\; $68.57$ &\quad\; $73.85$ &\quad\; $74.06$ &\quad\; $74.16$ &\quad\; $\mathbf{74.57}$\\
    Textures &\quad\; $64.14$ &\quad\; $68.25$ &\quad\; $68.72$ &\quad\; $70.23$ &\quad\; $70.85$ &\quad\; $68.12$ &\quad\; $\mathbf{70.92}$ &\quad\; $68.73$\\
    Quick Draw &\quad\; $59.73$ &\quad\; $66.84$ &\quad\; $66.07$ &\quad\; $63.72$ &\quad\; $67.26$ &\quad\; $68.92$ &\quad\; $67.95$ &\quad\; $\mathbf{69.67}$\\
    Fungi &\quad\; $33.54$ &\quad\; $41.99$ &\quad\; $37.52$ &\quad\; $43.76$ &\quad\; $38.78$ &\quad\; $42.22$ &\quad\; $38.88$ &\quad\; $\mathbf{43.61}$\\
    VGG Flower &\quad\; $79.94$ &\quad\; $88.72$ &\quad\; $86.79$ &\quad\; $89.09$ &\quad\; $84.48$ &\quad\; $87.42$ &\quad\; $85.10$ &\quad\; $\mathbf{87.90}$\\
    \midrule
    Traffic Sign &\quad\; $42.91$ &\quad\; $52.42$ &\quad\; $65.13$ &\quad\; $58.46$ &\quad\; $67.72$ &\quad\; $68.12$ &\quad\; $68.32$ &\quad\; $\mathbf{68.73}$\\
    MSCOCO &\quad\; $29.37$ &\quad\; $41.74$ &\quad\; $43.05$ &\quad\; $46.17$ &\quad\; $43.26$ &\quad\; $45.76$ &\quad\; $43.26$ &\quad\; $\mathbf{46.04}$\\
    MNIST$^{\star}$ &\quad\; N/A &\quad\; N/A &\quad\; N/A &\quad\; N/A &\quad\; $95.74$ &\quad\; $94.73$ &\quad\; $\mathbf{95.92}$ &\quad\; $94.86$\\
    CIFAR-10$^{\star}$ &\quad\; N/A &\quad\; N/A &\quad\; N/A &\quad\; N/A &\quad\; $73.04$ &\quad\; $72.62$ &\quad\; $\mathbf{73.50}$ &\quad\; $73.25$\\
    CIFAR-100$^{\star}$ &\quad\; N/A &\quad\; N/A &\quad\; N/A &\quad\; N/A &\quad\; $60.72$ &\quad\; $59.00$ &\quad\; $\mathbf{61.29}$ &\quad\; $60.72$\\
    \midrule
    Average seen &\quad\; $62.24$ &\quad\; $67.52$ &\quad\; $64.97$ &\quad\; $66.94$ &\quad\; $69.50$ &\quad\; $69.86$ &\quad\; $69.82$ &\quad\; $\mathbf{70.39}$\\
    \midrule
    Average unseen w/o $\star$ &\quad\; $36.14$ &\quad\; $47.08$ &\quad\; $54.09$ &\quad\; $52.32$ &\quad\; $55.49$ &\quad\; $56.94$ &\quad\; $55.79$ &\quad\; $\mathbf{57.39}$\\
    Average all w/o $\star$ &\quad\; $57.02$ &\quad\; $63.43$ &\quad\; $62.80$ &\quad\; $64.02$ &\quad\; $66.70$ &\quad\; $66.77$ &\quad\; $67.01$ &\quad\; $\mathbf{67.79}$\\
    \midrule
    Average unseen w/ $\star$ &\quad\; N/A &\quad\; N/A &\quad\; N/A &\quad\; N/A &\quad\; $68.10$ &\quad\; $68.05$ &\quad\; $\mathbf{68.46}$ &\quad\; $68.39$\\
    Average all w/ $\star$ &\quad\; N/A &\quad\; N/A &\quad\; N/A &\quad\; N/A &\quad\; $68.96$ &\quad\; $69.16$ &\quad\; $69.29$ &\quad\; $\mathbf{69.62}$\\
    \bottomrule
  \end{tabular}
  }
  \label{tab:all_metadataset}
\end{table*}

\subsection{Reinforcement Learning}
\subsubsection{Datasets and Experimental setup}
For the reinforcement learning, we evaluate GAP on two benchmarks: Half-cheetah locomotion~\cite{todorov2012physics} and 2D-Navigation~\cite{finn2017model}. 
The first benchmark, half-cheetah locomotion task, aims to predict direction and velocity. 
In the goal velocity experiments, the reward is the negative absolute value between the current velocity of the agent and a goal. The goal is randomly chosen from a uniform distribution ranging between $0.0$ and $2.0$. 
In the goal direction experiments, the reward is determined based on the magnitude of the velocity in either the forward or backward direction. The direction is randomly chosen for each task. 
Each task consists of rollouts with a length of $200$, and during training, $20$ rollouts are utilized per gradient step. 
The second benchmark, 2D Navigation task, aims to enable a point agent in a 2D environment to quickly learn a policy for moving from a starting position to a goal position. 
The observation consists of the current 2D position and the actions correspond to velocity commands clipped to be in the range of $[-0.1, 0.1]$. 
A goal position is randomly selected within the unit square $[-0.5, 0.5] \times [-0.5, 0.5]$ for each task. 
The reward is calculated as the negative squared distance to the goal. 
In total, $20$ trajectories are used for one gradient update. 
For all the experiments, our model follows a neural network policy used in~\cite{finn2017model}. The details of the architecture are provided in Section~\ref{sec:5.1.2}. 
Following the experimental protocol in~\cite{finn2017model}, we employ Trust-Region Policy Optimization~(TRPO) as the meta-optimizer~\cite{schulman2015trust}, compute the Hessian-vector products for TRPO using finite differences, and utilize the standard linear feature baseline~\cite{duan2016benchmarking}. 
The feature baseline is fitted separately at each iteration for every sampled task in the batch.

\subsubsection{Results}
The results in Fig.~\ref{fig:rl} show the average reward with respect to the update steps for MAML~\cite{finn2017model}, CAVIA~\cite{zintgraf2019fast}, ModGrad~\cite{simon2020modulating}, and GAP on Half-cheetah locomotion and 2D-Navigation.
These results demonstrate that GAP significantly outperforms the state-of-the-art algorithms even after a single gradient update step. Furthermore, GAP continues to improve with additional update steps across the three benchmarks. 
In Half-cheetah direction tasks~(Fig.~\ref{fig:rl_a}), GAP achieves rewards exceeding 600 with only one step, while MAML, CAVIA, and ModGrad fall short, reaching rewards below 600. 
Additionally, for Half-cheetah velocity tasks~(Fig.~\ref{fig:rl_b}), GAP attains rewards surpassing $-80$ with a single step, whereas MAML, CAVIA, and ModGrad only reach around $-80$, $-90$, and $-100$, respectively. 
For 2D Navigation tasks, GAP consistently achieves larger rewards than MAML, CAVIA, and ModGrad with just one step, as illustrated in Fig.~\ref{fig:rl_c}. 
Across all reinforcement learning tasks, GAP achieves larger rewards than the other methods as the number of updates increases.

\begin{figure*}[!t]
\centering
\begin{subfigure}{0.3\textwidth}
    \includegraphics[width=\textwidth]{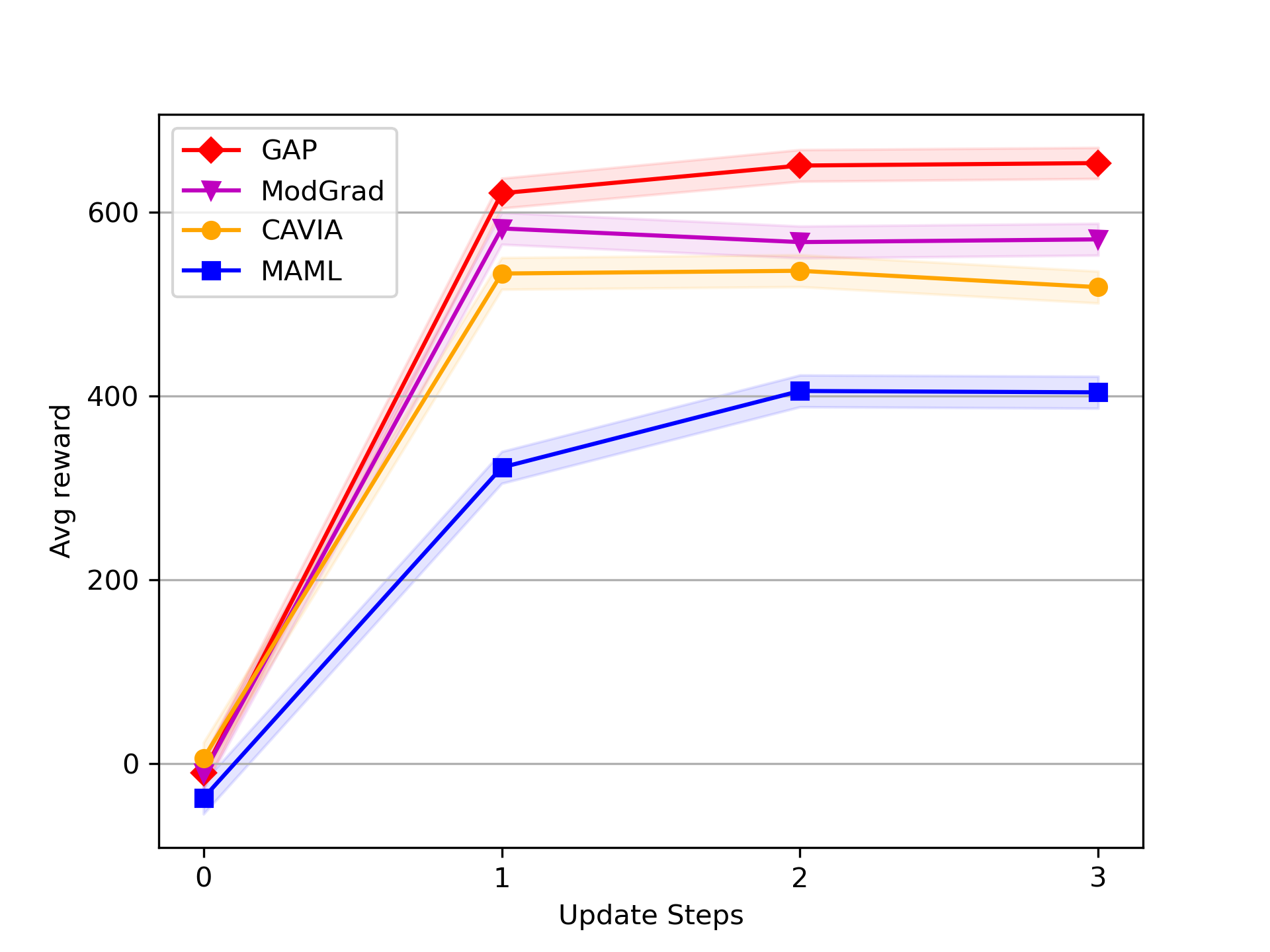}
    \caption{Half-Cheetah Direction}
    \label{fig:rl_a}
\end{subfigure}
\begin{subfigure}{0.3\textwidth}
    \includegraphics[width=\textwidth]{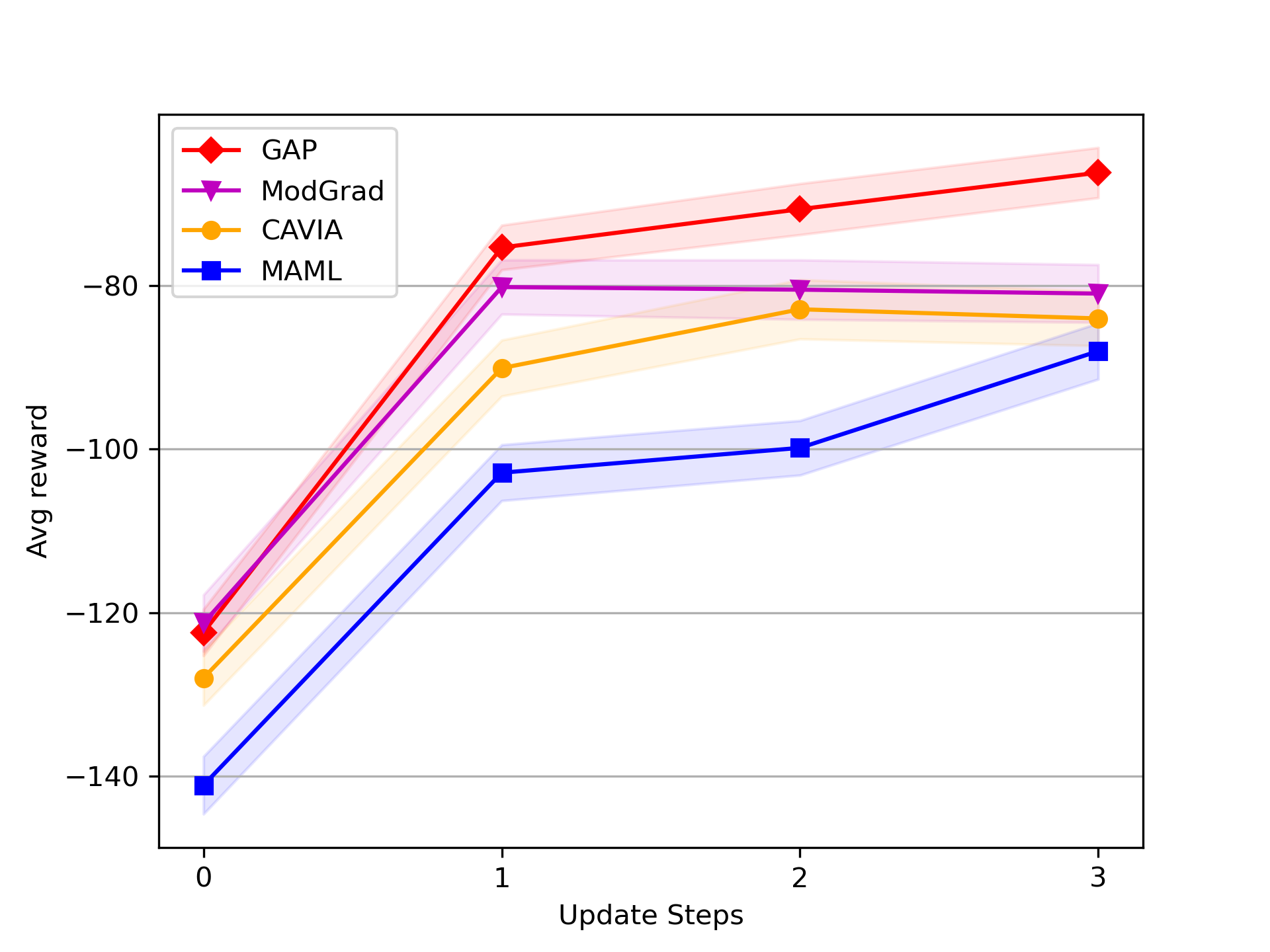}
    \caption{Half-Cheetah Velocity}
    \label{fig:rl_b}
\end{subfigure}
\begin{subfigure}{0.3\textwidth}
    \includegraphics[width=\textwidth]{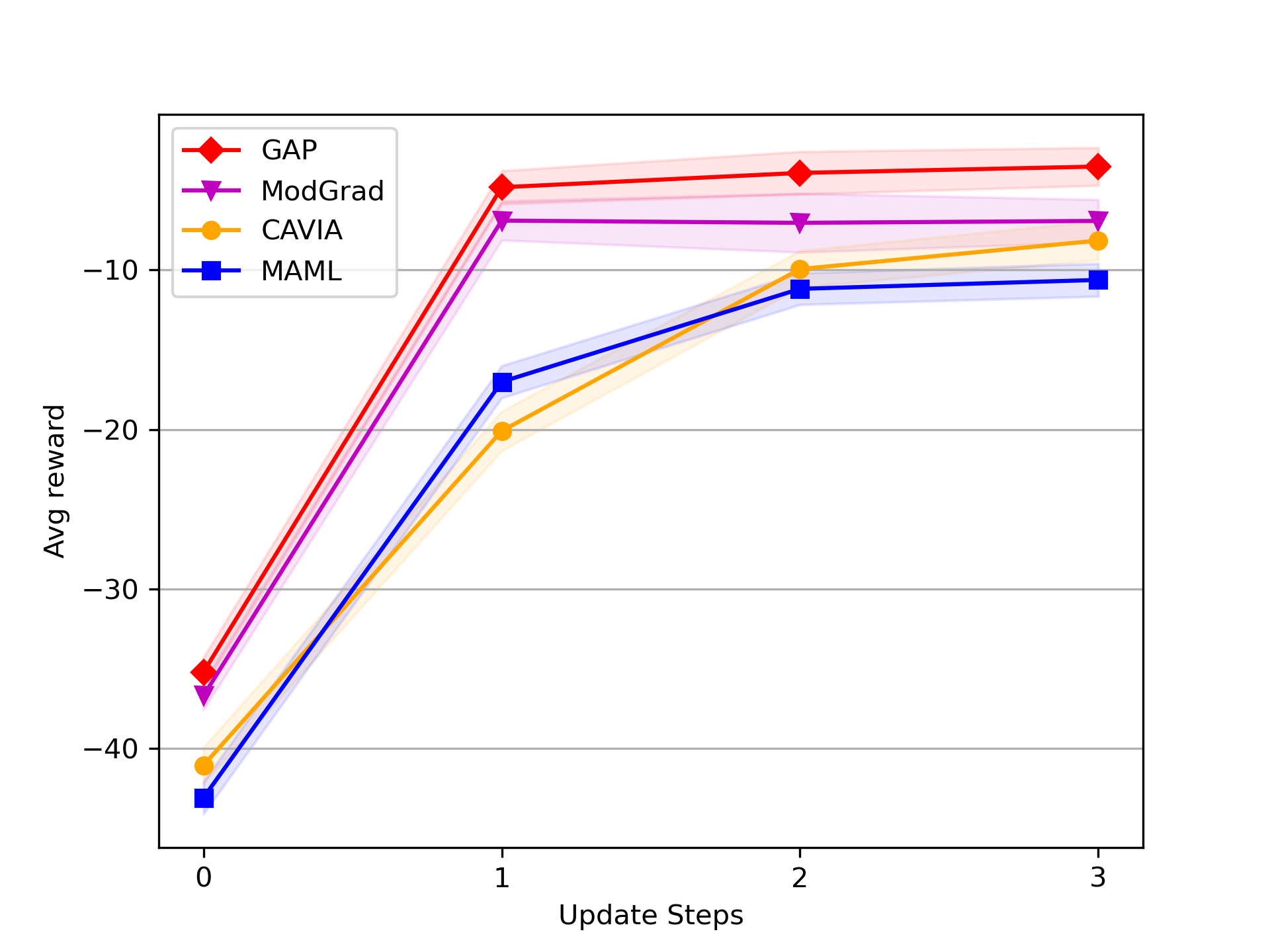}
    \caption{2D Navigation}
    \label{fig:rl_c}
\end{subfigure}
\caption{The average reward performance of MAML family and GAP models for reinforcement learning on half-cheetah direction, half-cheetah velocity, and 2D navigation. We report the performance as the number of gradient updates increases.}
\label{fig:rl}
\end{figure*}
\section{Discussion}
\label{discussion1}
\subsection{Number of meta parameters}
Recent MAML family and PGD-MAML family require a large increase in the number of meta-learning parameters as shown in Table~\ref{tab:param}. One advantage of GAP is that it requires only a very small increase in the number of meta parameters, when compared to the baseline of MAML. This is possible because we transform a gradient tensor into a gradient matrix, perform the SVD of the matrix, and assign only a small number of meta parameters that correspond to the diagonal matrix of the gradient matrix. 
For the \textit{Conv-4} network, GAP requires only 0.2\% increase of 
the meta parameters. 
Although the increase in the number of meta parameters is negligible, SVD of the gradient matrix can incur a large computational burden for large networks. This is addressed by Approximate GAP. 
\begin{table}[t!]
  \caption{Comparison of the number of parameters for MAML, existing methods, and GAP.}
  \centering
  \resizebox{.50\textwidth}{!}{
  \begin{tabular}{lcr}
        \toprule
        Algorithm                &\quad\;  \# of params          &  \% increase \\
        \midrule
        MAML~\cite{finn2017model}                  &\quad\quad\; $1.2109\times10^5$  &  \\ 
        \midrule
        Meta-SGD~\cite{li2017meta}              &\quad\quad\; $2.4218\times10^5$  & 100.0\% \\
        MC~\cite{park2019meta}                    &\quad\quad\; $2.7106\times10^6$  & 2140.4\% \\
        PAMELA~\cite{rajasegaran2020meta}                &\quad\quad\; $1.6239\times10^5$  & 34.1\% \\
        MH~\cite{zhao2020meta}                    &\quad\quad\; $7.2196\times10^7$  & 59586.7\% \\
        Sparse-MAML~\cite{von2021learning}           &\quad\quad\; $2.4218\times10^5$  & 100.0\% \\
        \midrule
        GAP                  &\quad\quad\; $\mathbf{1.2131\times10^5}$  & \textbf{0.2\%} \\
        \bottomrule
  \end{tabular}
  }
  \label{tab:param}
\end{table}

\subsection{Approximate GAP vs. simple constant preconditioners}
Approximate GAP is a low-complexity method where SVD operation is avoided by approximating GAP with a constant diagonal preconditioner. A natural question to ask is how does Approximate GAP compare with other constant diagonal preconditioners. To answer this question, we have compared Approximate GAP with Meta-SGD and a modified Meta-SGD. Meta-SGD~\cite{li2017meta} is a well-known constant diagonal preconditioner~(i.e., $\text{diag}(a_1,\cdots,a_n)$) that does not need to be positive definite and we also investigate its modification with a constraint on positive definiteness. The results are shown in Table~\ref{tab:performance_summary}. It can be observed that enforcing positive definiteness can improve Meta-SGD. Furthermore, an additional improvement can be achieved by Approximate GAP. While both modified Meta-SGD and Approximate GAP are positive definite, Approximate GAP is different because it inherits an additional constraint from GAP -- a block diagonal structure where a constant diagonal matrix $M$ is repeated~(i.e., $\text{blkdiag}(M,\cdots,M)$). The inherited constraint provides a gain over the modified Meta-SGD. 

\begin{table}[t!]
\caption{Performance comparison of three constant (i.e., non-adaptive) preconditioners for 5-way 1-shot on mini-ImageNet: Meta-SGD, Meta-SGD modified to satisfy positive definiteness, and our Approximate GAP.}
\centering
\resizebox{0.5\textwidth}{!}{
    \begin{tabular}{lccc}
    \toprule
    Algorithm       & Structure    & \begin{tabular}{@{}c@{}}Riemannian metric \\ (i.e., positive definite)\end{tabular}   & Acc. (\%)    \\
    \midrule
    Meta-SGD                             & $\text{diag}(a_1,\cdot,a_n)$                 & X                   & $50.47\%$   \\
    Meta-SGD with positive definiteness  & $\text{diag}(a_1,\cdot,a_n)$                 & O                   & $52.39 \%$   \\
    \midrule
    Approximate GAP                     & $\text{blkdiag}(M,\cdots,M)$                 & O                   & $\mathbf{53.52 \%}$ \\
    \bottomrule
    \end{tabular}
}
\label{tab:performance_summary}
\end{table}

\subsection{Does GAP learn a useful preconditioner}
While a Riemannian metric can be helpful, it does not mean any Riemannian metric will result in an improvement.
For the true parameter space with a specific underlying structure, the corresponding Riemannian metric needs to be applied to enable steepest descent~\cite{amari1996neural,amari1998natural}. 
For the special case of a two-layer neural network with a mean squared error~(MSE) loss, it was proven that Fisher information matrix is the corresponding Riemannian metric~\cite{amari1998natural}. 
For a general neural network, however, a proper Riemannian metric is unknown and it needs to be learned. 
In our work, we have devised a method to guarantee a Riemannian metric and have used the outer-loop optimization to learn the Riemannian metric.
In general, the learned Riemannian metric is unlikely to correspond perfectly to the true parameter space. Then, an important question is if the Riemannian metric learned by GAP is close enough to the desired one and if it is useful. To investigate this issue, we performed an ablation study by not applying the preconditioner $\mathbf{P}_{\text{GAP}}$. After training a GAP model, we have evaluated the performance with and without applying $\mathbf{P}_{\text{GAP}}$. The results are shown in Table~\ref{tab:precon} and clearly the preconditioner learned with the outer-loop optimization plays an essential role for improving the performance.

\begin{table}[t!]
\caption{Ablation study of $\mathbf{P}_{\text{GAP}}$ on mini-ImageNet. Performance of the GAP-trained model is significantly affected by not applying $\mathbf{P}_{\text{GAP}}$.}
  \centering
  \resizebox{.5\textwidth}{!}{
  \begin{tabular}{lcc}
        \toprule
        Algorithm         &\quad\;  1-shot               & 5-shot \\
        \midrule
        GAP w/o $\mathbf{P}_{\text{GAP}}$              &\quad\quad\; $48.23 \pm 0.80$  & $65.80 \pm 0.75$ \\ 
        GAP w/  $\mathbf{P}_{\text{GAP}}$              &\quad\quad\; $54.86 \pm 0.85$  & $71.55 \pm 0.61$ \\
        \bottomrule
  \end{tabular}
  }
   \label{tab:precon}
\end{table}

\subsection{Why is preconditioner helpful for meta-learning}
When the batch size is small, the resulting empirical gradient can be noisy~\cite{zhang2019taming, simon2020modulating}. A typical few-shot learning has only a small number of samples for the inner-loop optimization, and its gradient can be noisy. On the other hand, it was shown in~\cite{amari2020does} that preconditioned gradient descent with a positive definite preconditioner can achieve a lower risk than gradient descent when the labels are noisy, the model is mis-specified, or the signal is misaligned with the features. Under a misalignment, a properly chosen positive definite preconditioner can generalize better than gradient descent~\cite{amari2020does}. 
Considering the noisy gradient of inner loop optimization, it can be surmised that a positive definite preconditioner that is adaptive (i.e., a Riemannian metric) can be helpful for improving MAML. Note that the noisy case is in contrast to the case of supervised learning with a large amount of data~\cite{amari2020does}.

\section{Conclusion}
\label{sec:conlusion}
In this work, we proposed a new preconditioned gradient descent method called GAP, which is a PGD-MAML algorithm utilizing SVD operations to achieve a better generalization. We theoretically prove that GAP's preconditioner satisfies two desirable properties: the fully adaptive property and the Riemannian metric property. Thanks to the fully adaptive property, GAP can handle both inner-step and individual tasks with a diversity within the MAML framework. Additionally, GAP can enable steepest descent on the parameter space owing to the Riemannian metric property. Furthermore, we provide an efficient approximation called Approximate GAP to alleviate computational cost problems associated with SVD operations. Through extensive experiments over a variety of few-shot learning tasks, such as few-shot regression, few-shot classification, cross-domain few-shot classification, few-shot domain generalization, and reinforcement learning, we demonstrate the effectiveness, applicability, and generalizability of GAP.
\section*{Acknowledgement}
\label{sec:acknowledgement}
This work was supported by ETRI [23ZR1100, A Study of Hyper-Connected Thinking Internet Technology by autonomous connecting, controlling and evolving ways], NRF (NRF-2020R1A2C2007139), IITP [NO.2021-0-01343, Artificial Intelligence Graduate School Program (Seoul National University)], and the New Faculty Startup Fund from Seoul National University.

\bibliographystyle{unsrtnat}
\bibliography{reference}

\end{document}